\newtheorem{assumption}{Assumption}
\newtheorem{definition}{Definition}
\newtheorem{lemma}{Lemma}
\newtheorem{theorem}{Theorem}
\newcommand{\bt}{\mathcal{T}}
\newtheorem{remark2}{Remark}
\begin{document}

% Page header
\markboth{Ögren and Sprague}{Behavior Trees in Robot Control Systems}

% Title
\title{Behavior Trees in \\Robot Control Systems}

%Authors, affiliations address.
\author{Petter Ögren,$^1$% Firstname C. Authortwo,$^2$ 
and Christopher I. Sprague$^1$
\affil{$^1$KTH Royal Institute of Technology, Stockholm, Sweden, SE-10044; email: petter,sprague@kth.se}
%\affil{$^2$Department/Institute, University, City, Country, Postal code}
%\affil{$^3$Department/Institute, University, City, Country, Postal code}
}

%Abstract
\begin{abstract}

In this paper we will give a control theoretic perspective on the research area of behavior trees in robotics.
The key idea underlying behavior trees is to make use of modularity, hierarchies and feedback, in order to handle the complexity of a versatile robot control system. 
Modularity is a well-known tool to handle software complexity by enabling development, debugging and extension of separate modules without having detailed knowledge of the entire system. A hierarchy of such modules is natural, since robot tasks can often be decomposed into a hierarchy of sub-tasks. Finally, feedback control is a fundamental tool for handling uncertainties and disturbances in any low level control system, but in order to enable feedback control on the higher level, where one module decides what submodule to execute,
information regarding progress and applicability of each submodule needs to be shared in the module interfaces.

We will describe how these three concepts come to use in
theoretical analysis, practical design, as well as extensions and combinations with other ideas from control theory and robotics.

%(approximately 150 words). 
\end{abstract}

%Keywords, etc.
\begin{keywords}
behavior trees, modularity, hierarchical modularity, transparency, robustness, autonomous system, feedback, task switching
\end{keywords}
\maketitle

%Table of Contents
\tableofcontents

%Instructions:
%Annual Review of Control, Robotics, and Autonomous Systems review articles should present a critical \emph{appraisal of the significant}, rather than the total, literature in the field. This overview may include the authors' own work (even if unpublished). We expect that the reviews will be useful to a subset of senior students as well as to mature scientists. We do not intend that they be introductory, but neither should they be comprehensible only to experts such as yourself. They should emphasize where research in a given area \emph{should go, as well as where it has been}, such that it will influence the future course of knowledge. 

\section{Introduction}
%(what we have done and why it is important)
In this section we will describe why modularity, hierarchical structure, and feedback are useful in robot control systems, and how these three concepts are combined in a control structure called behavior trees (BTs).

The rapid development of robotic hardware and software has enabled the use of robots to 
expand beyond structured factory environments, into our homes, streets and diverse workplaces.
In these new settings, robots often need a wide range of capabilities, including the possibility to add even more features by online software updates.
It is well known that adding features to software increases complexity, which in turn increases the cost of development \cite{blumeHierarchicalModularity1999}. 
It is also well known that modularity is a key principle that can be used to reduce complexity. By dividing a system into modules with well defined interfaces and functionality, each module can be developed, tested, and extended without having detailed knowledge about the rest of the system.
Thus, there is reason to believe that modularity, in terms of well defined interfaces and functionality, is an important property also for a robot control system.

\begin{figure}[t]
	\centering
	\includegraphics[width=0.9\textwidth]{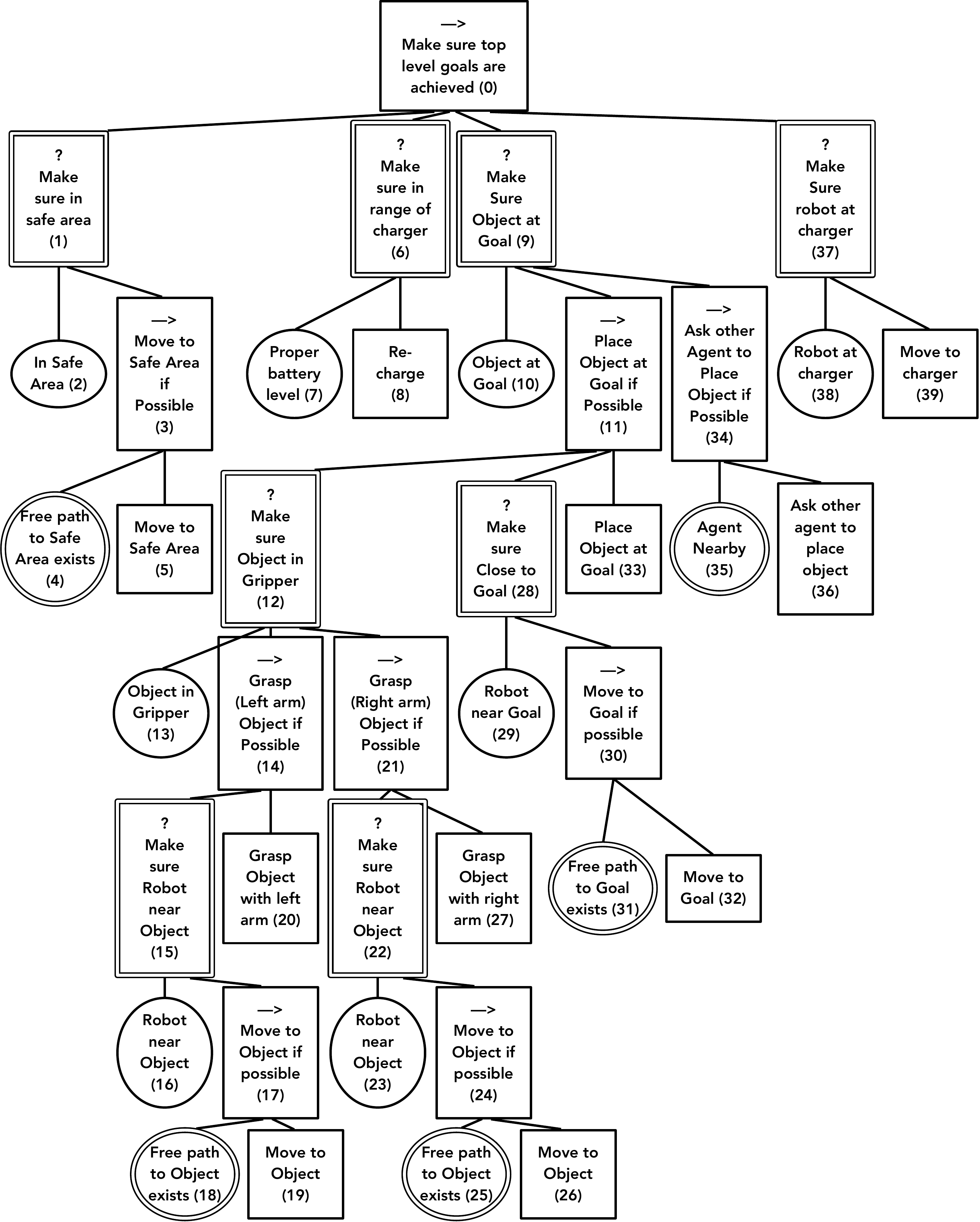}
	\caption{A mobile manipulator BT. The top level goals can be found in the top row: \emph{Make sure in safe area (1)}, \emph{Make sure in range of charger (6)}, \emph{Make sure object at goal (9)} and \emph{Make sure robot at  charger (37)}, in order of priority. If, at some time instant the action, \emph{Move to object (26)}, is executing, a human operator can easily understand why this action was chosen by reading every double stroked module on the branch towards the root: \emph{Move to object (26)}, in order to \emph{Make sure robot near object (22)}, (to) \emph{Make sure object in gripper (12)}, (to) \emph{Make sure object at goal(9)}. The  meaning of the double strokes will be explained in detail in Section~\ref{sec_design}.
	}
	\label{bt_big}
\end{figure}

A natural extension of modularity is hierarchical modularity
\cite{blumeHierarchicalModularity1999}, where modules can contain submodules and so on. The rationale for such a structure is the simple observation that when a system grows, a single layer of modules either results in a very large number of modules, or in modules that are themselves very large. Thus, the benefit of modularity is strengthened if the modules can contain submodules in a hierarchical fashion.
There is an additional reason why hierarchical modularity makes sense in robot control systems, and this is the fact that many robot tasks can naturally be divided into subtasks in a hierarchical way, an observation  that is underlying e.g. hierarchical task networks
\cite{sacerdotiStructurePlansBehavior1975,erolUMCPSoundComplete1994}. For example, fetching an item might involve moving to a cupboard and opening it, which might in turn involve grasping a handle, and so on.

To make the control system modular, we will make the actual control policy, the mapping from state to action, modular.
In many applications within robotics and control there is a need to compose a control policy out of a set of subpolicies.
In an autonomous car there might be subpolicies for parking, overtaking, lane keeping, handling intersections etc, and
in a mobile manipulator there might be subpolicies for grasping, docking with a recharger, moving from A to B etc.
%In an underwater robot there might be subpolicies for docking with a charger, following the contours of a beach, or gathering sensor data from an object of interest.

Feedback is perhaps the most important principle in control theory, and the property that separates open loop control from closed loop control.
In open loop control, a series of commands are executed over time according to some form of plan, while in closed loop control, the issued commands are constantly adapted based on current information obtained from monitoring key parts of the world state. 
It is clear that classical closed loop control should be executed at the lowest level of a hierarchical modular robot control system, but it is less clear what kind of observations should be used between two hierarchical levels, to allow one module to use feedback when it determines what submodule to execute.
We will come back to this question shortly, but for now we just note that if a submodule fails with achieving its goal, we do not want the  parent module to just execute the next submodule in an open loop fashion, but instead chose the proper submodule using feedback, based on the fact that the previous one just failed.

BTs were created to combine feedback with a hierarchical modular design. Thus, modules should capture some functionality that can be combined into larger modules, with a clearly defined interface between modules on all levels. Furthermore, 
feedback regarding the execution should be passed up the module hierarchy using the same interface.

The formal definition of BTs can be found in Section~\ref{sec_formal}, but here we will make an informal description. We let each module in the discussion above be a BT. Thus a complex BT can contain a number of sub-BTs and so on, as illustrated in Figure~\ref{bt_big}, where each node in the graph is the root of a sub-BT. The interface of all BTs (the lines in the figure) is given in terms of a function call with return values. When a BT is called it returns two things, first the suggested control action and second the information needed to apply feedback control and determine what sub-BT to execute. This information, or metadata, regarding the execution and applicability of a module is given in terms of one out of three symbols, S (success), F (failure), and R (running). Thus, if a submodule for grasping a cup returns success, the next submodule in the intended sequence, such as lifting the cup,  might be invoked. If, on the other hand, the submodule returns failure, some kind of fallback action needs to be invoked, such as trying to re-grasp the cup, or getting a better sensor reading of its pose. Finally, if the submodule returns running it might be preferable to let the execution run for a while longer.

%Thus, the uniform interface between all modules is 
%In this way, abstract feedback information is passed up to the parent module, allowing it to make

At this point we note that there are basically three fundamental reasons for stopping what you are doing and starting a new activity. Either you succeed and go on to the next action, or you fail, and need to handle this more or less unexpected fact, or an external event happened that makes the current action inappropriate. Imagine a robot tasked to fetch an object, such as in Figure~\ref{bt_big}.
If the robot is grasping the object, it might switch to moving if the grasping succeeds. If the grasping fails, it might try with the other arm.
A number of events might also occur to end the process of grasping the object. Another agent might put the object in its proper place (positive surprise, we are done), or, another agent might move the object further away (negative surprise, we need to move closer again), or the fire alarm might go off making the entire building unsafe (unrelated surprise, we need to leave the building).

The outline of this paper is as follows. 
In Section~\ref{sec_history} we give a brief history of BTs, followed by a formal definition in Section~\ref{sec_formal}.
Then, we investigate the property of modularity in some more detail in Section~\ref{sec_modularity}.
The issue of convergence is analyzed in Section~\ref{sec_convergence} followed by a design principle in Section~
\ref{sec_design}.
Safety guarantees and the connection to control barrier functions are treated in Section~\ref{sec_cbf}.
Then, we see how BTs are related to explainable AI in Section~\ref{sec_exai}, and can be connected to
reinforcement learning in Section~\ref{sec_rl}, evolutionary algorithms in Section~\ref{sec_ga} and planning in Section~\ref{sec_planning}.
Finally, conclusions,  together with a set of summary points and future important issues can be found in Section~\ref{sec_conclusions}.

%Check Micheles papers
%We will now see how to use modularity to build a complex BT from parts, where the convergence of the parts guarantee convergence of the composite structure.

 %-------------------------------------------------------------------------------------------------------------------------------------------------------------
\section{The history of behavior trees and their relation to finite state machines}
\label{sec_history}
The need for a modular hierarchical control structures is shared between the domains of robotics and computer games.
However, low-level capabilities such as grasping and navigation are research areas in their own right in robotics, but trivial in the virtual worlds of a computer game. Therefore, computer game programmers started putting together larger sets of low level capabilities earlier than roboticists, and hence experienced the drawback of finite state machines (FSMs) described
below
 %in Section~\ref{sec_bt_fsm} 
 earlier as well.
BTs were thus proposed as a response to those drawbacks by programmers in the gaming industry. 
It is hard to determine who was the first to concieve BTs, as
important ideas were shared in partially documented workshops, conferences and blog posts.
However, important milestones were definitely passed through the work of Michael Mateas and Andrew Stern \cite{mateas_behavior_2002}, and Damian Isla \cite{islaHandlingComplexityHalo2005}.
The development continued in the game AI community, and a few years later the first journal paper on BTs appeared \cite{florez-puga_query-enabled_2009}, followed by the first papers on BTs in robotics, 
 independently described in \cite{ogrenIncreasingModularityUAV2012} and \cite{bagnell_integrated_2012}.
 Note that there is also a completely different tool called behavior trees, that is used for requirement analysis\footnote{A different concept with the same name: \url{https://en.wikipedia.org/wiki/Behavior_tree}}.   

%In the beginning, BTs were seen as a subset of finite state machines (FSMs) \cite{isla_handling_2005}. However,  that is not the case for the BTs presented in this paper. 
As mentioned above, BTs were partially developed to improve modularity of (FSM) controllers.
FSMs, and in particular hierarchical FSMs  \cite{harelStatechartsVisualFormalism1987}, do have mechanisms for hierarchical modularity.
However, a key problem is that the transitions of a FSM are encoded inside the modules (states), thus each module needs to know about the existence and capabilities of the other modules, as well as the purpose of its own supermodule. In this way, each transition creates a dependence between two modules, and with $N$ modules there is $N^2$ possible transitions/dependencies. In comparison, a BT module only has to know if it succeeded or not.
%As FSMs and BTs both address the problem of combining several controllers into one, comparisons are often done. 
Regarding expressivity,
it was shown in 
\cite{biggarExpressivenessHierarchyBehavior2021}  that BTs with internal variables are equally expressive as FSMs.
Thus, like with two general purpose programming languages, the choice between the two is not governed by what is possible, but rather what makes the design process smooth. A more detailed description of the relationship between BTs and FSM, as well as a broad overview of research on BTs can be found in the recent survey \cite{iovinoSurveyBehaviorTrees2020} and the book \cite{colledanchiseBehaviorTreesRobotics2018}.

\section{Definition of Behavior Trees}
\label{sec_formal}

In this section we will formally define BTs and their execution for both discrete and continuous-time systems. This formulation is based on
\cite{colledanchiseHowBehaviorTrees2017,ogrenConvergenceAnalysisHybrid2020,sprague2021ctrl} and chosen to enable a control-theoretic analysis of BTs\footnote{Other BT formulations exist, including  memory versions of interior nodes, and leaf nodes encapsulating the execution of the system dynamics, thereby allowing the parallel execution of two leaves e.g., controlling different motors on the same robot, see \cite{colledanchiseHowBehaviorTrees2017,colledanchiseImplementationBehaviorTrees2021}.}.

The core idea is to formally define a BT as a combination of a controller and a metadata function used to provide feedback regarding the execution. Using the metadata, these BTs can then be combined in order to create more complex BTs in a hierarchical tree structure, as in Figure~\ref{bt_big}, hence the name behavior tree.

Let the system state be $x \in X  \subset \mathbb{R}^n$ and the system dynamics be given by $\dot x = f(x,u)$ or $x_{t+1}=f(x_t,u)$, see Definition~\ref{def_execution}, with $u \in U \subset \mathbb{R}^k$. 

\begin{definition}[Behavior tree]
\label{def_bt}
 A BT $\bt_i$ is a pair
 
\begin{equation}
 \bt_i = (u_i, r_i)
\end{equation}
where $i$ is an index, $u_i:X \rightarrow U$ is the controller that runs when the BT is executing, and $r_i:X \rightarrow \{\mathcal{R},\mathcal{S},\mathcal{F} \}$ provides metadata regarding the applicability and progress of the execution.
\end{definition}

A BT can either be created through a hierarchical combination of other BTs, using the Sequence and Fallback operators described below, or it can be defined by directly specifying $u_i(x)$ and $r_i(x)$.

The metadata $r_i$ is interpreted as follows: 
\emph{Running} ($\mathcal{R}$),
\emph{Success} ($\mathcal{S}$), and
\emph{Failure} ($\mathcal{F}$).
Let the Running region ($R_i$),
Success region ($S_i$) and
Failure region ($F_i$) correspond to a partitioning\footnote{Throughout the paper we use the word partition, even though some of the sets might be empty.} of the state space,  defined as follows:
\begin{equation}
 R_i=\{x \in X: r_i(x)=\mathcal{R} \},~
 S_i=\{x\in X: r_i(x)=\mathcal{S} \},~
 F_i=\{x\in X: r_i(x)=\mathcal{F} \}. \nonumber
\end{equation}

\begin{definition}
\label{def_execution}
Assuming the BT $\bt_i$ is the root, and not a subtree of another BT, and $x\in R_i$,
the system evolves according to $\dot x = f(x,u_i(x))$ or $x_{t+1}=f(x_t,u_i(x))$ depending on if the system is continuous time or discrete time.

\end{definition}

\begin{remark2}
 If   $x\in S_i \cup F_i$ of the root BT, it has either succeeded or failed, and it is up to the user to apply an appropriate action, such as shutting down the robot or entering an idle mode. 
 If this is not desired, an additional top layer of the BT can be designed that executes
when the main BT returns success or failure. If this top layer always return running, we have  $S_i=F_i=\emptyset$ for the overall tree.
 % such that 
% , that executes the appropriate action when the
 %main BT returns success or failure.
  \end{remark2}

The execution of a BT can thus be seen as a discontinuous dynamical system \cite{cortesDiscontinuousDynamicalSystems2008}, as illustrated in  Figure~\ref{operating_regions_0}.
Below, in Lemma \ref{lem_omega_executes}, we will show that if the BT $\bt_0$ is composed of a set of subtrees $\{\bt_i\}$ then the state space $X$ is divided into different so-called operating regions $\Omega_i$ such that $u_0(x)=u_i(x)$ if $x \in \Omega_i$,
as illustrated in Figure~\ref{operating_regions_0}.
Thus, in the continuous time case above, the execution will in most cases be a discontinuous dynamical system with corresponding issues regarding existence and uniqueness \cite{filippovDifferentialEquationsDiscontinuous1988}. Going deeper into these issues is beyond the scope of this paper, therefore we will just make the following assumption.

\begin{assumption}
 The BTs $\bt_i$ are defined in such a way that the execution in Definition \ref{def_execution} has solutions that exist and are unique.
\end{assumption}

\begin{figure}[t]
	\centering
	\includegraphics[width=4cm]{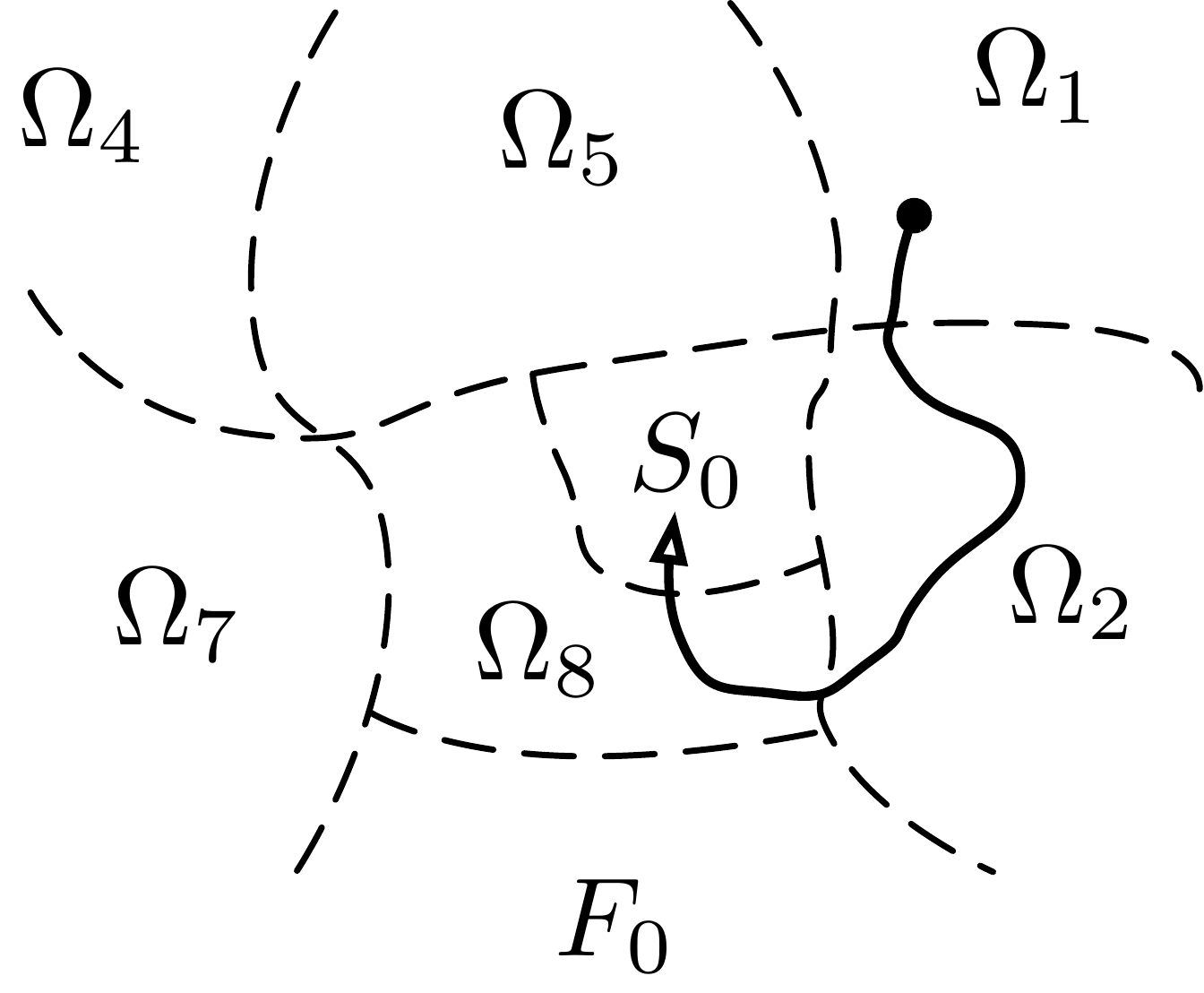}
	\caption{The state space $X$ is partitioned into a set of operating regions $\Omega_i$, and the global success region $S_0$ and failure region $F_0$. We want to design the BT such that the state reaches  $S_0$ and avoids $F_0$. The solid line illustrates an execution starting in $\Omega_1$ and ending in $S_0$.
	Note that the $\Omega_i$ of each subtree is not defined in the subtree itself, but depends on feedback in the form of the return status $r_i(x)$ of a number of neighboring subtrees as described in Definition~\ref{def_operating_region}. }
	\label{operating_regions_0}
\end{figure}

 As described above, the main point of BTs is to enable the creation of complex controllers from simpler ones in a modular fashion. There are two ways of combining BTs, the sequence and fallback compositions.

\begin{definition}\emph{(Sequence Compositions of BTs)}
\label{def_sequence}
 Two or more BTs  can be composed into a more complex BT using a Sequence operator,
 $\bt_0=\mbox{Sequence}(\bt_1,\bt_2).$ 
 Then $r_0,u_0$ are defined as follows
\begin{eqnarray}
   \mbox{If }x\in S_1:& %\label{eq:sequence_definition} \\
   &r_0(x) =  r_2(x),~ %\label{bts:eq:seq1a}\\
   u_0(x) =  u_2(x) \label{bts:eq:seq1}\\ 
   \mbox{ else}:& %\nonumber \\
   &r_0(x) =  r_1(x),~% \label{bts:eq:seq2a}\\
   u_0(x) =  u_1(x) \label{bts:eq:seq2}
 \end{eqnarray}
\end{definition}
$\bt_1$ and $\bt_2$ are called children of $\bt_0$. Note that when executing $\bt_0$, the first child $\bt_1$ in (\ref{bts:eq:seq2}) is executed as long as it returns \emph{Running} or \emph{Failure} $(x_k \not \in S_1)$.  
The second child of the Sequence is executed in (\ref{bts:eq:seq1}), only when the first returns \emph{Success} $(x_k \in S_1)$. Finally, the Sequence itself,  $\bt_0$ returns \emph{Success} only when all children have succeeded $(x \in S_1 \cap S_2)$.

For notational convenience, we write
\begin{equation}
\mbox{Sequence}(\bt_1, \mbox{Sequence}(\bt_2,\bt_3))= \mbox{Sequence}(\bt_1,\bt_2, \bt_3),
\end{equation}
and similarly for arbitrarily long compositions. The sequence node is also denoted by ($\rightarrow$), as seen in Figure~\ref{bt_big}.

\begin{remark2}[Giving names to sequence and fallback nodes]
When drawing BTs, as in  Figure~\ref{bt_big}, the symbols $\rightarrow$ and $?$ are used to denote sequences and fallbacks. However, some users prefer to also give descriptive names to the subtrees starting from each node, to improve readability. We believe this is a useful practice, similar to choosing good names for functions when programming. Giving all nodes names improves readability, underlines the fact that all subtrees, including single leaf nodes, have an identical interface to its parent, see Definition~\ref{def_bt}, and is very convenient in combination with software GUIs that enable a subtree to be visually collapsed into a single node and expanded back again.
\end{remark2}

The advantage of properly named subtrees can be seen in Figure~\ref{bt_big}. As described in the caption, the reason for executing a leaf node is clear from the names of the subtrees it belongs to. This is discussed further in Section~\ref{sec_exai}, on explainable AI.

%\begin{remark2}[Understanding the trace of a BT]
%In \cite{endsleyAutonomousHorizonsSystem2015} the benefit of explainable and transparent AI is discussed.
%Note that if all subtrees have names, the node names from the root to the currently executing leaf give a clear description of what is going on. If Figure X (mobman) this would give: Achive top level goals-object at goal-object in gripper-grasp object. (rewrite)
%\end{remark2}

A key element of BTs is how the operating regions $\Omega_i$ of  Figure~\ref{operating_regions_0} depend on the success, failure and running regions, $S_i, F_i, R_i$ of all subtrees across a hierarchical structure. Thus we need to determine a number of properties of these sets.

\begin{lemma}
 If $\bt_0=\mbox{Sequence}(\bt_1,\bt_2)$, then Definition~\ref{def_sequence}  implies that
 \begin{align}
 S_0 &= S_1 \cap S_2,  \label{eq_seq_s}\\
 F_0 &= F_1 \bigcup (S_1 \cap F_2),\label{eq_seq_f}\\
 R_0 &= R_1 \bigcup (S_1 \cap R_2),
 \end{align}
\end{lemma}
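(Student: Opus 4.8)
The plan is to unfold Definition~\ref{def_sequence} and perform a case split on whether the current state lies in $S_1$ or not, using crucially that $\{R_i,S_i,F_i\}$ is a partition of $X$ for every BT $\bt_i$: this is what makes the two cases exhaustive, and it identifies the ``else'' branch in~(\ref{bts:eq:seq2}) as exactly the set $X\setminus S_1 = F_1\cup R_1$ (a disjoint union). I would then write each of $S_0,F_0,R_0$ as the disjoint union of its intersection with $S_1$ and its intersection with $X\setminus S_1$, and compute the two pieces separately.

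First I would treat $x\in S_1$. Here~(\ref{bts:eq:seq1}) gives $r_0(x)=r_2(x)$, so on $S_1$ the return status of $\bt_0$ coincides with that of $\bt_2$. Intersecting the definitions of $S_0,F_0,R_0$ with $S_1$ therefore yields $S_0\cap S_1 = S_1\cap S_2$, $F_0\cap S_1 = S_1\cap F_2$, and $R_0\cap S_1 = S_1\cap R_2$.

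Next I would treat $x\notin S_1$. Here~(\ref{bts:eq:seq2}) gives $r_0(x)=r_1(x)$, so on $X\setminus S_1$ the status of $\bt_0$ coincides with that of $\bt_1$. Since $r_1(x)=\mathcal{S}$ is impossible for $x\notin S_1$, we get $S_0\cap(X\setminus S_1)=\emptyset$; and since $F_1$ and $R_1$ are disjoint from $S_1$ by the partition property, $r_1(x)=\mathcal{F}$ holds for $x\notin S_1$ exactly when $x\in F_1$, and $r_1(x)=\mathcal{R}$ exactly when $x\in R_1$, giving $F_0\cap(X\setminus S_1)=F_1$ and $R_0\cap(X\setminus S_1)=R_1$.

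Finally I would assemble the pieces: $S_0 = (S_1\cap S_2)\cup\emptyset = S_1\cap S_2$, $F_0 = (S_1\cap F_2)\cup F_1$, and $R_0 = (S_1\cap R_2)\cup R_1$, which are precisely~(\ref{eq_seq_s}), (\ref{eq_seq_f}), and the third claimed identity. There is essentially no hard step; the only point requiring care is to notice that, in the $x\notin S_1$ branch, $S_0$ gains no new points (because $r_0=r_1\neq\mathcal{S}$ there), whereas $F_0$ and $R_0$ do inherit the full sets $F_1$ and $R_1$ — this asymmetry is exactly why the final formula for $S_0$ lacks the extra union term that appears for $F_0$ and $R_0$.
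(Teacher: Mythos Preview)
Your proof is correct and is precisely the ``straightforward application of the definition'' that the paper invokes without spelling out; the case split on $x\in S_1$ versus $x\notin S_1$, combined with the partition property of $\{R_1,S_1,F_1\}$, is exactly what is needed. The only difference is that you have written out the details the paper omits.
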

\begin{proof}
 A straightforward application of  the definition gives the result above.
\end{proof}

\begin{figure}[t]
	\centering
	\includegraphics[height=4cm]{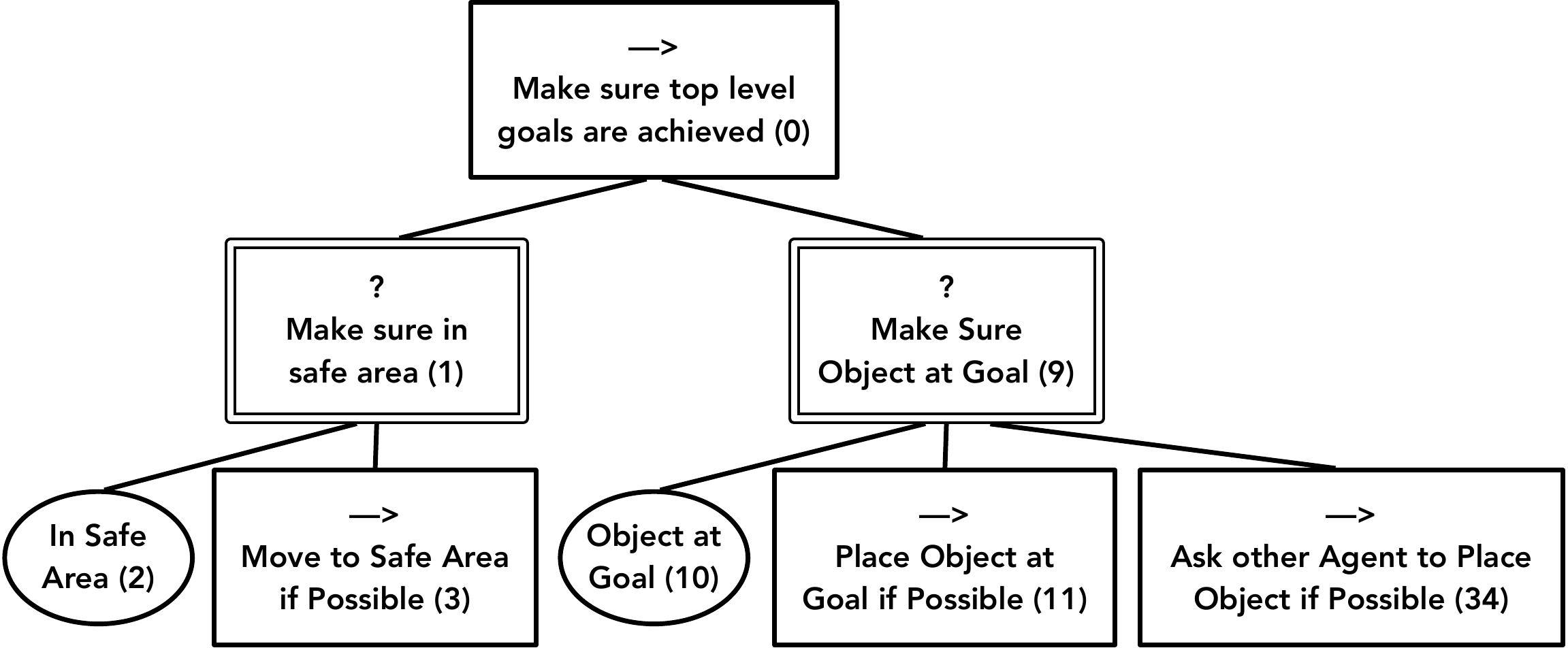}
	\caption{This mobile manipulator BT is a subset of the one in Figure~\ref{bt_big}, with the overall objective of moving a given object to a designated goal area, while staying in the safe part of the working area. All nodes are numbered with the index of the corresponding subtree. The sequence nodes 0, 3, 11, 34 are denoted by a $(\rightarrow)$ symbol and the fallback nodes 1, 9 by a $(?)$. Conditions are indicated by ovals. Note that the children of nodes 3, 11, 34 are not shown in this figure.}
	\label{seq_fall}
\end{figure} 
 
%\begin{example1}
 Consider the Mobile Manipulator example in Figure~\ref{seq_fall}, which is actually a subset of Figure~\ref{bt_big}, where we have removed subtrees 6 and 37, and collapsed subtrees 3, 11 and 34 into single nodes. This was done to illustrate how the modularity enables analysis to be done at different levels. The root node, 0, is a sequence composition of subtrees 1 and 9. Equation (\ref{bts:eq:seq1}) now states that \emph{Make sure top level goals are achieved (0)} executes \emph{Make sure object at goal (9)}, $u_0(x)=u_9(x)$, only when \emph{Make sure in safe area (0)} returns success, $x\in S_1$. If that is not the case, node 1 will be executed, $u_0(x)=u_1(x)$. Similarly, Equation (\ref{eq_seq_f}), implies that node 0 returns failure when either node 1 returns failure (no way to reach the safe area), or when node 1 returns success and node 9 returns failure (in safe area, but no way to get object to goal). 
%\end{example1}

%\emph

\begin{definition}\emph{(Fallback Compositions of BTs)}
\label{def_fallback}
 Two or more BTs  can be composed into a more complex BT using a Fallback operator,
 $\bt_0=\mbox{Fallback}(\bt_1,\bt_2).$ 
 Then $r_0,u_0$ are defined as follows
\begin{align}
   \mbox{If }x\in {F}_1:& %\label{eq:fallback_definition} %\\
   &r_0(x) =  r_2(x),~ %\label{bts:eq:fall1a} %\\
   u_0(x) =  u_2(x)  \label{bts:eq:fall1} \\ 
   \mbox{ else}:& %\nonumber % \\
   &r_0(x) =  r_1(x),~ %\label{bts:eq:fall2a} %\\
   u_0(x) =  u_1(x) \label{bts:eq:fall2}
 \end{align}
\end{definition}

Note that when executing the new BT, $\bt_0$  first keeps executing its first child $\bt_1$, in (\ref{bts:eq:fall2}) as long as it returns  \emph{Running} or \emph{Success} $(x \not \in F_1)$.  
The second child of the Fallback is executed in (\ref{bts:eq:fall1}), only when the first returns \emph{Failure} $(x \in F_1)$. Finally, the Fallback itself $\bt_0$ returns \emph{Failure} only when all children have been tried, but failed $(x \in F_1 \cap F_2)$, hence the name Fallback.

 For notational convenience, we write
\begin{equation}
\mbox{Fallback}(\bt_1, \mbox{Fallback}(\bt_2,\bt_3))= \mbox{Fallback}(\bt_1,\bt_2, \bt_3),
\end{equation}
and similarly for arbitrarily long compositions. The fallback node is also denoted by ($?$), as seen in Figure~\ref{seq_fall}.

\begin{lemma}
 If $\bt_0=\mbox{Fallback}(\bt_1,\bt_2)$, then Definition~\ref{def_fallback}  implies that
 \begin{align}
 S_0 &= S_1 \bigcup (F_1 \cap S_2),\\
 F_0 &= F_1 \cap F_2, \label{eq_fall_f}\\
 R_0 &= R_1 \bigcup (F_1 \cap R_2),
 \end{align}
\end{lemma}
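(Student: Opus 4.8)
The plan is to mirror the (omitted) proof of the Sequence lemma: a direct case analysis driven by Definition~\ref{def_fallback}, using throughout the fact that $\{S_1,F_1,R_1\}$ and $\{S_2,F_2,R_2\}$ are each a partition of $X$. Since by definition $S_0 = r_0^{-1}(\mathcal{S})$, $F_0 = r_0^{-1}(\mathcal{F})$ and $R_0 = r_0^{-1}(\mathcal{R})$, it suffices to determine, for each $x \in X$, the value $r_0(x)$ prescribed by the Fallback operator and then collect preimages. I would split $X$ into the two branches of Definition~\ref{def_fallback}: on $F_1$ we have $r_0(x) = r_2(x)$, and on $X \setminus F_1 = S_1 \cup R_1$ we have $r_0(x) = r_1(x)$.

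For the success region, $r_0(x) = \mathcal{S}$ holds either when $x \in S_1 \cup R_1$ and $r_1(x) = \mathcal{S}$ — which, since $S_1$ is exactly the set where $r_1 = \mathcal{S}$ and $S_1 \subseteq X \setminus F_1$, reduces to $x \in S_1$ — or when $x \in F_1$ and $r_2(x) = \mathcal{S}$, i.e. $x \in F_1 \cap S_2$; taking the union gives $S_0 = S_1 \cup (F_1 \cap S_2)$. The running region is handled identically with $\mathcal{R}$ and $R_i$ in place of $\mathcal{S}$ and $S_i$, yielding $R_0 = R_1 \cup (F_1 \cap R_2)$. For the failure region, the branch $x \in S_1 \cup R_1$ would require $r_1(x) = \mathcal{F}$, which is impossible since $r_1 = \mathcal{F}$ only on $F_1$, disjoint from $S_1 \cup R_1$; so this branch contributes nothing, and only $x \in F_1$ with $r_2(x) = \mathcal{F}$ survives, giving $F_0 = F_1 \cap F_2$. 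As a sanity check I would note that the three sets so obtained are pairwise disjoint and cover $X$, which follows from the partition property applied to $\bt_1$ on $X \setminus F_1$ and to $\bt_2$ on $F_1$.

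I expect no genuine obstacle here; the only thing to keep straight is consistently using the partition structure — in particular $S_1 \subseteq X \setminus F_1$ and $R_1 \subseteq X \setminus F_1$ — so that the ``else'' branch of Definition~\ref{def_fallback} can be rewritten cleanly in terms of $S_1$ and $R_1$, and so that the would-be extra failure term on that branch vanishes. This is the point dual to the one in the Sequence lemma, where it was the success region that collapsed to a single intersection.
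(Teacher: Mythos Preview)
Your proposal is correct and is exactly the ``straightforward application of the definition'' the paper invokes, just spelled out in full detail; the paper's own proof is a one-line remark, and your case analysis on $F_1$ versus $X\setminus F_1 = S_1\cup R_1$ is precisely what that remark abbreviates.
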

\begin{proof}
 A straightforward application of  the definition gives the result above.
\end{proof}

\begin{definition}[Condition]
 If a BT $\bt_i$ is such that $R_i= \emptyset$ we call it a Condition. Being a BT, it still has $u_i$ defined, but as we will see in Lemma~\ref{lem_omega_executes} below, that control will not be executed.
 \end{definition}
 
 %\emph
 %\begin{example1}
 Consider again the Mobile Manipulator example in Figure~\ref{seq_fall}. 
 \emph{Make sure in safe area (1)} is a fallback composition of nodes 2, 3. 
 Equation (\ref{bts:eq:fall1}) now states that node 1 executes \emph{Move to safe area if possible (3)}, $u_1(x)=u_3(x)$, only when \emph{In Safe Area (2)} returns failure, $x\in F_2$. 
 %If that is not the case, $x\not \in F_2$, node 3 will be executed by node 1, $u_1(x)=u_3(x), r_1(x)=r_3(x)$. 
 Furthermore, node 2 is a condition, $R_2=\emptyset$, thus if $x\not \in F_2$ we have $x \in S_2$ and success will be returned by node 1 up to node 0 which would then execute \emph{Make sure object at goal (9)} and so on.
 Furthermore, Equation (\ref{eq_fall_f}) indicates that the only way for node 1 to fail is if both node 2 and node 3 fails. That is \emph{Make sure in safe area (1)} only returns failure if both \emph{In safe area (2)} and \emph{Move to safe area if possible (3)} return failure.
 
%\end{example1}

Now we have all we need to create and execute BTs. In the next section we will explore the modularity of BTs, and then analyze under what circumstances the execution will converge to the success region.

 %-------------------------------------------------------------------------------------------------------------------------------------------------------------
 \section{Optimal Modularity} 
 \label{sec_modularity}

  One of the key advantages of BTs is their modularity, a property made possible by the fact that all subtrees on all levels of a BT have the same interface, given by Definition~\ref{def_bt}. However, as was shown in \cite{biggar2020modularity}, a deeper analysis  can be made, by extending a measure of modularity/complexity used in graph theory. In this section we give a very brief overview of the key theoretical results, showing that BTs have a so-called cyclomatic complexity of one, a fact that makes them \emph{optimally modular}, within a particular class of control structures.
 
 \begin{figure}[!h]
	\centering
	\includegraphics[width=11cm]{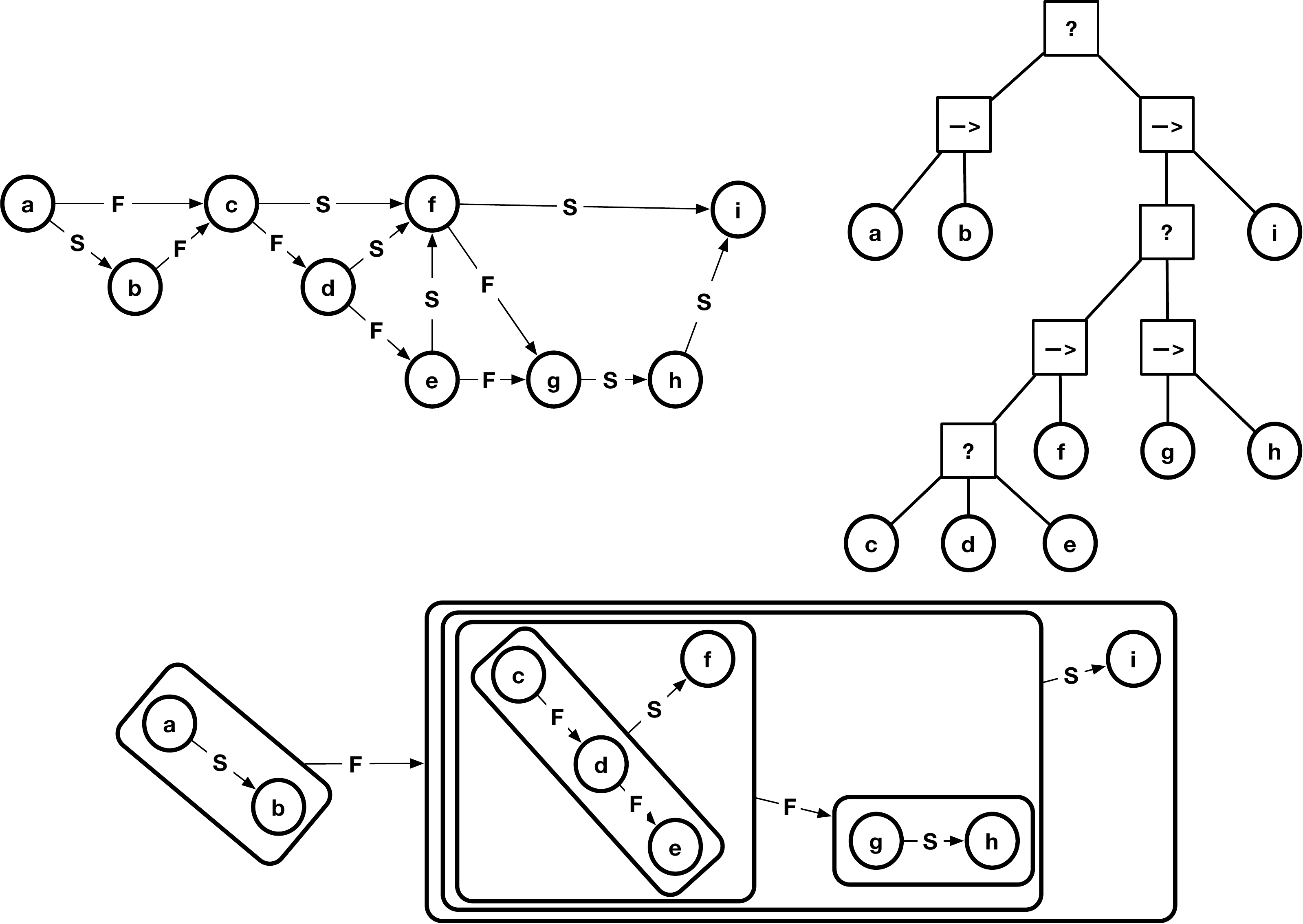}
	\caption{A BT (top right) with its corresponding DS (top left), and the module decomposition of the DS (bottom). Note how all graphs in the decomposition correspond to paths without cycles. By Theorem~\ref{thm_modular} all BTs will give rise to DSs with such non-cyclic graphs, a fact that results in all BTs having a  cyclomatic complexity of one.}
	\label{biggar_modularity}
\end{figure} 

%The authors of \cite{biggar2020modularity} started with a general state feedback controller $u:X \rightarrow U$. This controller should in turn be composed 
%(CHECK AGAIN WHAT THE STARTING POINT WAS, HOW GENERAL?)

To investigate the concept of modularity in general reactive control architectures, so-called  Decision Structures (DS) were defined in \cite{biggar2020modularity}. These are directed graphs, as illustrated in the upper left part of Figure~\ref{biggar_modularity}. Each node in this structure corresponds to a controller $u_i$, and each edge label corresponds to a return status $r_i$ that can be returned by the node the arc is leaving.  The DS is executed in the following way:  starting at the source node,  (a) in Figure~\ref{biggar_modularity} (top left), look at the return status $r_i(x)$ of that node, and follow the edge (if there is one) with a label corresponding to $r_i(x)$. Similarly, the return status of the new node is checked and the corresponding edge is followed until you find a $r_i(x)$ without a corresponding outgoing edge label. Then this controller $u_i$ is chosen. This process is then constantly iterated from the source to keep track of the proper controller to run.

Given the above it  can be seen in Figure~\ref{biggar_modularity} that the execution of the DS on the upper left is identical to the execution of the BT in the upper right.
Thus, this DS is equivalent to the BT.
 In fact,
DSs are generalizations of BTs in the sense that all BTs can be written as a DS, but not all DSs can be written as BTs. 
DSs are fairly general, as the label set can be of any size, as long as it is finite. Thus a DS is very similar to a FSM, with the difference that a DS constantly starts from the source in order to determine what controller to execute.

%
%In a BT, the return status R does not lead to the execution of another leaf-node, thus the arc labels in the DS are only S and F in  Figure~\ref{biggar_modularity}. However, the definition of DS allows an arbitrary, but finite, set of arc labels.
%The authors of  \cite{biggar2020modularity} also propose another generalization of BTs, called k-BTs, with the same tree structure as BTs, but an arbitrary set of return statuses. In this nomenclature, normal BTs correspond to 2-BTs, since there are two arc labels, S and F, in Figure~\ref{biggar_modularity}.
% But note that also the k-BTs for a subset of all possible DSs.

Using inspiration from modules in graph theory \cite{gallaiTransitivOrientierbareGraphen1967}, the authors of  \cite{biggar2020modularity} define modules in DS as follows.

\begin{definition}[Definition 6.3 in \cite{biggar2020modularity}, Modules in decision structures]
  Let $Z$ be a decision structure. Let $Y \subset N(Z)$ be a subset of the nodes where $Z[Y]$ is also a decision structure. We say $Y$ is a module if for every node $v \in N(Z) \setminus Y$, any arc from $v$ into $Y$ goes to $Y$’s source, and if there is an arc labelled $r$ out of $Y$ to $v$, then for every $y \in Y$ the $r$ out of $y$ exists and goes either to $v$ or to another element of $Y$.
\end{definition}
After that they define quotient DS, where the modules are collapsed into single nodes.
\begin{lemma}[Lemma 6.8 in \cite{biggar2020modularity}]
  Let $Z$ be a decision structure and $P$ a modular partition. Then the quotient $Z/P$ is also a decision structure. Moreover, if $P$ is maximal then $Z /P$ is prime.
\end{lemma}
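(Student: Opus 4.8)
The plan is to reduce the statement to two separate claims: first, that the quotient $Z/P$ of a decision structure by a modular partition is again a decision structure; second, that if the modular partition $P$ is maximal then $Z/P$ is \emph{prime}, i.e.\ has no nontrivial modules. For the first claim, I would unpack the definition of the quotient: collapse each module $M \in P$ to a single node, keep an arc labelled $r$ from the node $[M]$ to $[M']$ precisely when there is an arc labelled $r$ from some $y \in M$ to some node in $M'$. The content to verify is that this is well-defined and satisfies the defining property of a decision structure --- namely that out of each node, for each label $r$ there is at most one outgoing arc with that label. This is exactly where the module condition earns its keep: the definition of a module guarantees that if one $y \in M$ has an $r$-arc leaving $M$ to a node $v$, then \emph{every} $y \in M$ has an $r$-arc, and each such arc goes either to $v$ or stays inside $M$. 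Hence at the quotient level there is a unique $r$-successor of $[M]$, so $Z/P$ is well-defined and is a decision structure; I would also check that incoming arcs are consistent (they all land on $M$'s source, so the collapsed node inherits a well-defined source).

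For the second claim --- maximality of $P$ implies $Z/P$ prime --- I would argue by contrapositive. Suppose $Z/P$ is not prime, so it has a nontrivial module $\bar Y \subsetneq N(Z/P)$ with $|\bar Y| \ge 2$ (or more precisely a module that is neither a singleton nor the whole node set). The key step is then to \emph{lift} $\bar Y$ back to $Z$: let $Y = \bigcup_{[M] \in \bar Y} M$, the union of the modules collapsed into the nodes of $\bar Y$. I would then verify that $Y$ is a module of $Z$. This requires checking the two conditions in the module definition: (i) any arc from a node $v \notin Y$ into $Y$ lands on $Y$'s source --- this follows because such an arc descends from an arc in $Z/P$ into $\bar Y$, which by modularity of $\bar Y$ lands on $\bar Y$'s source, combined with the fact that arcs into each individual $M$ land on $M$'s source; (ii) the analogous outgoing condition, which combines modularity of $\bar Y$ in $Z/P$ with the module property of each $M$. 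Having shown $Y$ is a module strictly containing and properly refining the blocks of $P$ (because $\bar Y$ was nontrivial), I would conclude that the partition obtained by replacing the blocks inside $Y$ with $Y$ itself is a modular partition strictly coarsening $P$, contradicting maximality.

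I would need to be slightly careful about what ``maximal'' means here and about the degenerate cases: presumably $P$ maximal means that among all modular partitions, $P$ is not strictly refined by another (equivalently, its blocks are the maximal proper modules), and ``prime'' means the only modules of $Z/P$ are singletons and the whole set. The lifting argument must respect these conventions --- in particular I must ensure the lifted $Y$ is a \emph{proper} module of $Z$ (not all of $N(Z)$) and is not simply one of the existing blocks of $P$, which is exactly guaranteed by $\bar Y$ being a nontrivial module of $Z/P$.

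The main obstacle I anticipate is the lifting step: showing that a module of the quotient pulls back to a module of the original structure, and that this interacts correctly with the source-node bookkeeping (every arc into a module must hit its source, and this property must be shown to be preserved both when collapsing and when lifting). The well-definedness of the quotient's arc labels is the other place where care is needed, but the module definition is essentially tailored to make that work, so I expect it to go through cleanly; the pullback argument, by contrast, is where one genuinely has to combine the two levels of modularity and keep track of sources. In a short proof I would simply say ``a straightforward check of the definitions'' for the quotient being a decision structure, and spend the real effort on the maximality-implies-prime direction via the contrapositive lifting argument sketched above.
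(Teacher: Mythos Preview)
The paper does not actually prove this lemma: it is stated as a quotation of Lemma~6.8 from \cite{biggar2020modularity} and no proof is given in the present paper (the surrounding text simply says ``After that they define quotient DS, where the modules are collapsed into single nodes'' and then moves on). So there is no in-paper proof to compare your proposal against.

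That said, your sketch is a reasonable attack on the result as stated. The first claim (that $Z/P$ is a decision structure) does reduce to the well-definedness of outgoing $r$-arcs, and you have correctly identified that the second clause of the module definition is exactly what forces uniqueness at the quotient level; you should also verify that $Z/P$ has a unique source and is acyclic (or whatever the precise definition of decision structure in \cite{biggar2020modularity} requires), not just the at-most-one-outgoing-arc-per-label property. For the second claim your contrapositive lifting argument is the natural one; the only place to be careful is your interpretation of ``maximal modular partition'': make sure it matches the convention in \cite{biggar2020modularity}, since whether the lifted module $Y$ contradicts maximality depends on exactly which partitions are being compared.
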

Given these, a maximal module decomposition is defined (the interested reader is referred to \cite{biggar2020modularity}) leading to the following theorem.

\begin{theorem}[Theorem 6.15 in \cite{biggar2020modularity}]
Let $Z$ be a decision structure with $k$ distinct arc labels. Then $Z$ is structurally equivalent to a k-BT if and only if every quotient graph in Z's module decomposition is a path.\end{theorem}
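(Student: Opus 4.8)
The plan is to prove both implications by induction on a recursive tree structure, and to show that the two natural tree structures in play — the syntax tree of a $k$-BT and the module decomposition tree of a decision structure — line up so that every internal node corresponds to a path quotient. Since structural equivalence of decision structures preserves modules, quotients, and hence the whole module decomposition, for the ($\Rightarrow$) direction one may assume without loss of generality that $Z$ is literally the decision structure induced by some $k$-BT $\bt$, and argue by induction on the composition depth of $\bt$.

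For the base case, a leaf BT $(u_i,r_i)$ induces a single-node decision structure, which is its own decomposition, and a single node is a path. For the inductive step, write $\bt_0 = \mathrm{Op}(\bt_1,\dots,\bt_m)$ where $\mathrm{Op}$ is one of the $k$-ary generalizations of the Sequence and Fallback operators of Definitions~\ref{def_sequence} and~\ref{def_fallback}. The key claim is that the sub-structure induced by each $\bt_j$ is a \emph{module} of $Z$ whose source is the source of $\bt_j$'s structure: this is essentially the interface property of Definition~\ref{def_bt}, since a subtree interacts with the rest of $\bt_0$ only through its root and through its return status, so all arcs from outside into the subtree land on its root, and the arcs leaving it are exactly the ``progression/exit'' arcs created by $\mathrm{Op}$. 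Collapsing these $m$ modules then yields the quotient $b_1 \to b_2 \to \dots \to b_m$, a directed path whose arc labels encode the progression statuses of $\mathrm{Op}$ (Success for a Sequence, Failure for a Fallback, and the analogous label in the $k$-ary case) — this is read off directly from the defining equations~(\ref{bts:eq:seq1})–(\ref{bts:eq:seq2}) and~(\ref{bts:eq:fall1})–(\ref{bts:eq:fall2}). By the inductive hypothesis each $\bt_j$'s structure has a module decomposition all of whose quotients are paths; nesting these under the top path gives such a decomposition of $Z$, and uniqueness of the maximal module decomposition (the theory behind Lemma~6.8 of~\cite{biggar2020modularity}) forces the canonical decomposition of $Z$ to consist of paths as well.

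For the ($\Leftarrow$) direction, one builds a $k$-BT bottom-up along the module decomposition tree of $Z$. A leaf of that tree is a single controller $u_i$ and becomes a leaf BT $(u_i,r_i)$. At an internal node the associated quotient is, by hypothesis, a directed path $b_1 \to \dots \to b_m$ on child modules $M_1,\dots,M_m$, for which induction already supplies $k$-BTs $\bt_1,\dots,\bt_m$ structurally equivalent to $Z[M_1],\dots,Z[M_m]$. One then reads off from the labels along the path (which control routing from one child to the next) and from the labels of the arcs leaving the module (which control the status the module reports to its parent) the unique $k$-ary composition operator $\mathrm{Op}$ that reproduces this routing, and sets $\bt = \mathrm{Op}(\bt_1,\dots,\bt_m)$. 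Structural equivalence is then checked by verifying that the execution semantics agree: tracing arcs from the source of $Z$ until reaching a status with no outgoing arc selects the same controller that recursive evaluation of $\bt$ selects; by the inductive hypotheses this reduces to the single path quotient, where it is immediate from the definition of $\mathrm{Op}$ together with the module axioms (every arc entering $M_j$ hits its source, and the arcs leaving a child are coherent across the whole path).

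The main obstacle is the bookkeeping of arc labels in the $k$-ary setting. Unlike binary Sequence or Fallback, a node on a path quotient of a general decision structure may carry up to $k$ outgoing arcs — one continuing along the path, the others leaving the module — and one must show that there is a single $k$-ary operator consistent with \emph{all} of them simultaneously, and that the module axioms guarantee the leaving arcs are uniform along the entire path so that the quotient arc out of the module is even well defined. A secondary subtlety is that the \emph{syntactic} $k$-BT tree need not coincide literally with the \emph{canonical} decomposition tree: a directed path on many nodes is in general not prime, so the freedom to re-bracket nested compositions corresponds to distinct but equivalent decompositions; the argument must therefore rely on uniqueness of the maximal decomposition and on the stability of ``path quotient'' under the merging that produces maximal modules, rather than on a naive bijection between BT syntax trees and decomposition trees.
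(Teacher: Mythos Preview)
The paper does not prove this theorem at all: its entire proof reads ``See~\cite{biggar2020modularity}.'' There is therefore no approach in the paper to compare yours against; the result is simply quoted from Biggar and Zamani's original work, and any assessment of your argument has to be made on its own merits rather than by alignment with the present paper.

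On those merits, your plan is broadly the right shape --- induction along the BT syntax tree for ($\Rightarrow$) and along the module decomposition tree for ($\Leftarrow$), with the central claim being that each child subtree of a composition node is a module of the induced decision structure. You have also correctly flagged the two genuine hazards: first, that in the $k$-label setting a node on a path quotient may have several outgoing labels and you must check that the module axioms force enough uniformity to read off a single $k$-ary operator; second, that the BT syntax tree and the canonical (maximal) module decomposition tree need not coincide node-for-node, so the ($\Rightarrow$) direction cannot conclude ``every quotient in \emph{the} decomposition is a path'' merely from exhibiting \emph{some} decomposition into paths --- you need the stability of the path property under the passage to maximal modules. Your sketch acknowledges this but does not actually discharge it; that step (essentially that a quotient of a path by a modular partition is again a path, and conversely that refining a path module still yields path quotients) is where the real work lies, and as written it is asserted rather than argued. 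If you intend this as a full proof rather than a plan, that is the gap to close.
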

\begin{proof}
See   \cite{biggar2020modularity}.
\end{proof}

Here, a k-BT is a generalization of BTs having a label set of size $k$, with 2-BTs corresponding to normal BTs, counting the labels success and failure, but not running since it does not lead to a transition in the DS. The k-BT also has k different interior nodes, a generalization of the two nodes Sequence and Fallback, defined for 2-BTs.

This theorem is clearly illustrated in Figure~\ref{biggar_modularity}, where we can see that there are no cycles in the module decomposition of the DS.
The number of cycles has been shown to be correlated with the difficulty of testing and debugging a piece of code
\cite{watsonStructuredTestingTesting1996}. Therefore, the concept of cyclomatic complexity has been defined for graphs and the authors of \cite{biggar2020modularity} extend it to DSs as follows:

%With these tools they define cyclomatic complexity and essential complexity in a way analogous to the corresponding concepts of graph theory.
\begin{definition}[Definition 6.19 in \cite{biggar2020modularity}]
  Let $Z$ be a decision structure. The cyclomatic complexity of $Z$ is the number of linearly independent undirected cycles in $Z$, plus one.
\end{definition}
Then, the concept is extended to account for modularity.
 \begin{definition}[Definition 6.21 in \cite{biggar2020modularity}]
  Let $Z$ be a decision structure. The essential complexity of $Z$ is the maximum cyclomatic complexity of any quotient graph in its module decomposition.
\end{definition}
 
 Finally they prove the following theorem.
 
\begin{theorem}[Theorem 6.23 in \cite{biggar2020modularity}]
\label{thm_modular}
Let $Z$ be a decision structure with $k$ distinct edge labels. $Z$ is equivalent to a k-BT if and only if it has essential complexity 1.
\end{theorem}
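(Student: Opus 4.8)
The plan is to obtain the statement as a short corollary of Theorem~6.15 of \cite{biggar2020modularity} (the k-BT/path characterization stated above) together with a direct unpacking of the complexity definitions. First I would make a purely graph-theoretic observation: for any finite graph the cyclomatic complexity, being the number of linearly independent undirected cycles plus one, equals one exactly when the graph has no undirected cycle, i.e.\ when its underlying undirected graph is a forest; since every decision structure is connected (its source reaches all nodes), a decision structure has cyclomatic complexity one precisely when it is a tree. Feeding this into Definition~6.21 of \cite{biggar2020modularity}, and using that each quotient graph already has cyclomatic complexity at least one (so a maximum equal to one forces every term to equal one), we get: $Z$ has essential complexity one if and only if every quotient graph in the maximal module decomposition of $Z$ is a tree.

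Comparing this with Theorem~6.15 of \cite{biggar2020modularity}, which says $Z$ is equivalent to a k-BT if and only if every quotient graph in that same decomposition is a path, the implication ``equivalent to a k-BT $\Rightarrow$ essential complexity one'' is immediate, since a path is in particular a tree. For the converse it suffices to show that for these quotient graphs \emph{tree} and \emph{path} coincide, i.e.\ that any quotient graph that happens to be a tree is actually a directed path. At this point I would invoke Lemma~6.8 of \cite{biggar2020modularity}, which guarantees that a quotient by a maximal modular partition is \emph{prime}; so the entire theorem reduces to one structural claim: a prime decision structure whose underlying undirected graph is a tree must be a directed path.

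I would prove that claim by contradiction. If $Q$ is prime, its underlying graph is a tree, and $Q$ is not a path, then that tree has a node $v$ of undirected degree at least three, and removing $v$ breaks $Q$ into at least three branches, each joined to $v$ by a single arc (a second connecting arc would create an undirected cycle); moreover, being a tree, $Q$ has no node with two incoming arcs, so the arcs at $v$ consist of at most one incoming arc plus several outgoing arcs with distinct labels. The idea is then to assemble a nontrivial module from these pieces --- natural candidates being $\{v\}$ together with all of its ``downstream'' branches, or a single branch together with $v$ --- and to verify the two module axioms of Definition~6.3 of \cite{biggar2020modularity} (every external arc into the candidate lands on its source, and every label leaving the candidate leaves every node of it consistently) using that the tree admits no alternative routes between branches and that out-arcs at each node carry distinct labels. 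Existence of such a nontrivial module contradicts primeness.

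I expect this last structural step to be the main obstacle: the reduction to Theorem~6.15 and the rewriting of ``essential complexity one'' as ``every quotient is a tree'' are routine, but identifying the correct candidate module at a high-degree node and checking the second module axiom (label-consistency across the whole candidate) needs care, since that axiom depends both on how the at most $k$ labels at $v$ are distributed among the branches and on whether the branch roots are sink nodes.
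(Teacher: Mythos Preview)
The paper does not prove this theorem at all; its entire proof is the sentence ``See \cite{biggar2020modularity}.'' So there is no in-paper argument to compare your proposal against --- you are supplying a proof where the present paper offers none and simply defers to the source.

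On the substance of your attempt, the reduction to Theorem~6.15 via ``prime tree $\Rightarrow$ path'' is the natural route, but the module-construction you sketch does not close, and the obstacle is sharper than your last paragraph suggests. Take the smallest branching case: three nodes $r,a,b$ with arcs $r\to^{S}a$ and $r\to^{F}b$ and nothing else. This is a tree, not a path. Your first candidate, ``$v$ together with all downstream branches,'' is the whole structure --- trivial. Your second candidate, $\{r,a\}$, has the out-arc $r\to^{F}b$, so Definition~6.3 requires $a$ to carry an $F$-arc to $b$ or inside $\{r,a\}$; but $a$ is a sink, so the clause fails (and $\{r,b\}$ fails symmetrically). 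The only remaining proper subset $\{a,b\}$ is edgeless and not a decision structure. Under the definitions as reproduced in this paper, that three-node tree is therefore \emph{prime}, its sole quotient is itself, and its essential complexity is $1$ --- yet it is not a path, so your key structural claim is false on this example, and with it the converse direction collapses.

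Since Theorems~6.15 and~6.23 are both asserted in \cite{biggar2020modularity}, the discrepancy almost certainly lies in definitions that are only partially quoted here --- for example, whether cyclomatic complexity is computed after augmenting the decision structure with a virtual exit node (which would give the three-node tree complexity $2$), or whether the module definition in the source differs from the excerpt in Definition~6.3. You need to consult the original for those details before this reduction can be made to work; as it stands, the argument does not go through with the definitions available in this paper.
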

\begin{proof}
See   \cite{biggar2020modularity}.
\end{proof}

Looking at the case of $k=2$ (two distinct edge labels, S and F), the theorem says that BTs are exactly the DSs with essential complexity 1. Thus, BTs correspond to the class of optimally modular DSs.

%\cite{gallai1967transitiv}

%-------------------------------------------
 \section{Proving convergence} 
   \label{sec_convergence}
Many control problems are formulated in terms of making some equilibrium point stable, such that a wide set of state trajectories starting from different states all converge to that equilibrium point. For BTs we do not pick some particular equilibrium point, but instead assume that the success region of the root, $S_0$ is chosen to capture the set of desired outcomes, and therefore we also assume that  the design objective is to make a large set of state trajectories converge to points inside $S_0$.
Thus, in this section we will study the problem of when we can guarantee that the state will end up in $S_0$.

The main result is 
a general  convergence proof for BTs, Theorem~\ref{thm_main}, including a few examples.
We will try to make use of the modularity of BTs, in the sense that the result can be applied at all levels of abstractions, either treating an entire subtree as a single entity as in Figure~\ref{seq_fall}, or in terms of its parts, as in Figure~\ref{bt_big}.
 
The idea behind the proof is straightforward, and illustrated in Figure~\ref{operating_regions}.  As in Figure~\ref{operating_regions_0}, the state space is partitioned into operating regions, $\Omega_i$, and overall failure and success regions, $F_0,S_0$, while the arrows indicate possible transitions between these sets. If all transitions from some $\Omega_i$ are to either $S_0$ or $\Omega_j, j>i$, and the state never stays indefinitely in $\Omega_i$, it will eventually reach $S_0$. Note that a similar analysis can be done at several different levels of abstraction. If $\Omega_6 =\Omega_4 \cup \Omega_5$ the analysis can either be done considering $\Omega_4, \Omega_5$ separately as in Figure~\ref{operating_regions}(a), or together, as in Figure~\ref{operating_regions}(b).

 \begin{figure}[!h]
	\centering
	\begin{subfigure}[b]{0.35\textwidth}
         \centering
         \includegraphics[width=\textwidth]{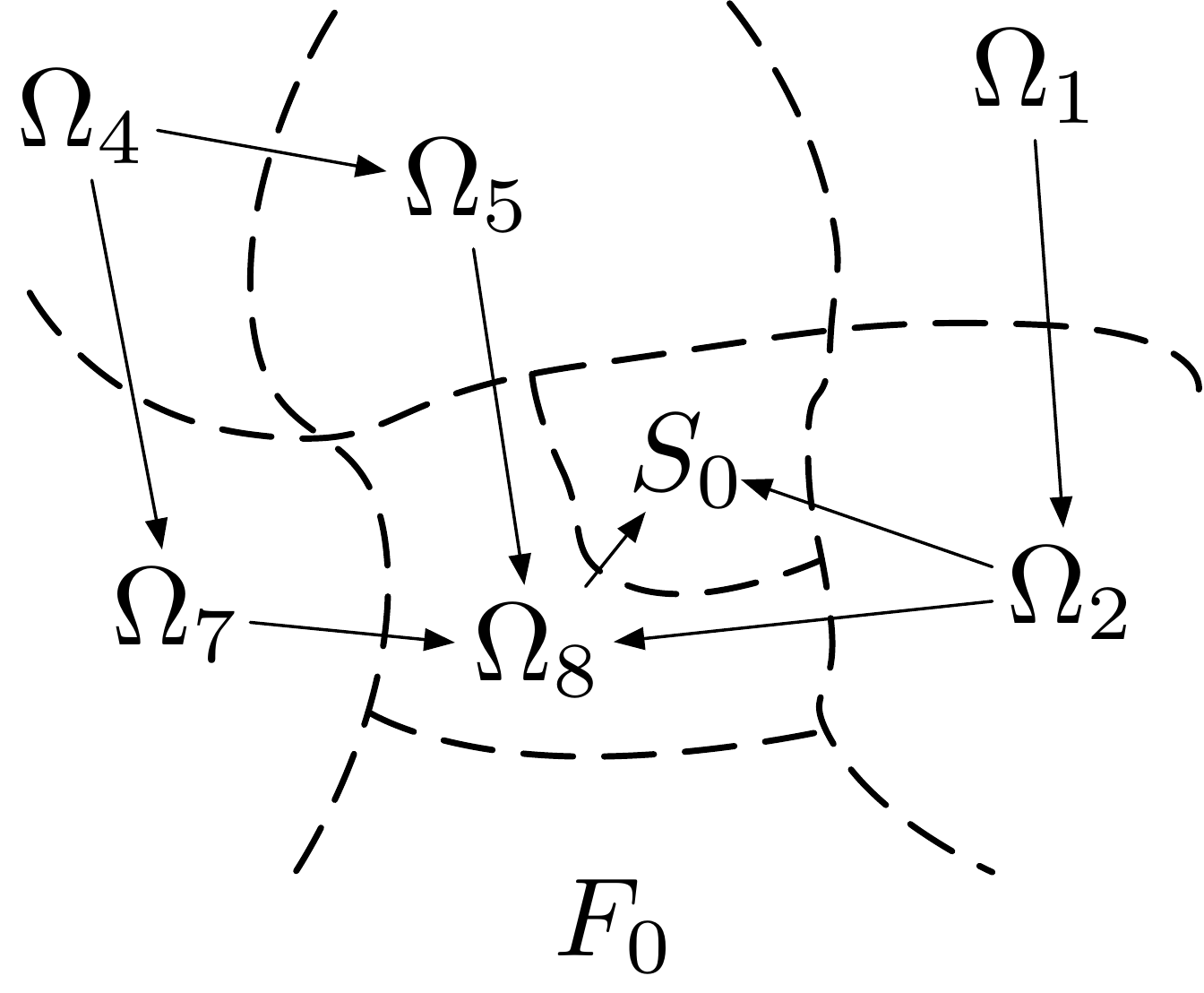}
         \caption{}
      \end{subfigure}
     \hfill
     \begin{subfigure}[b]{0.35\textwidth}
         \centering
         \includegraphics[width=\textwidth]{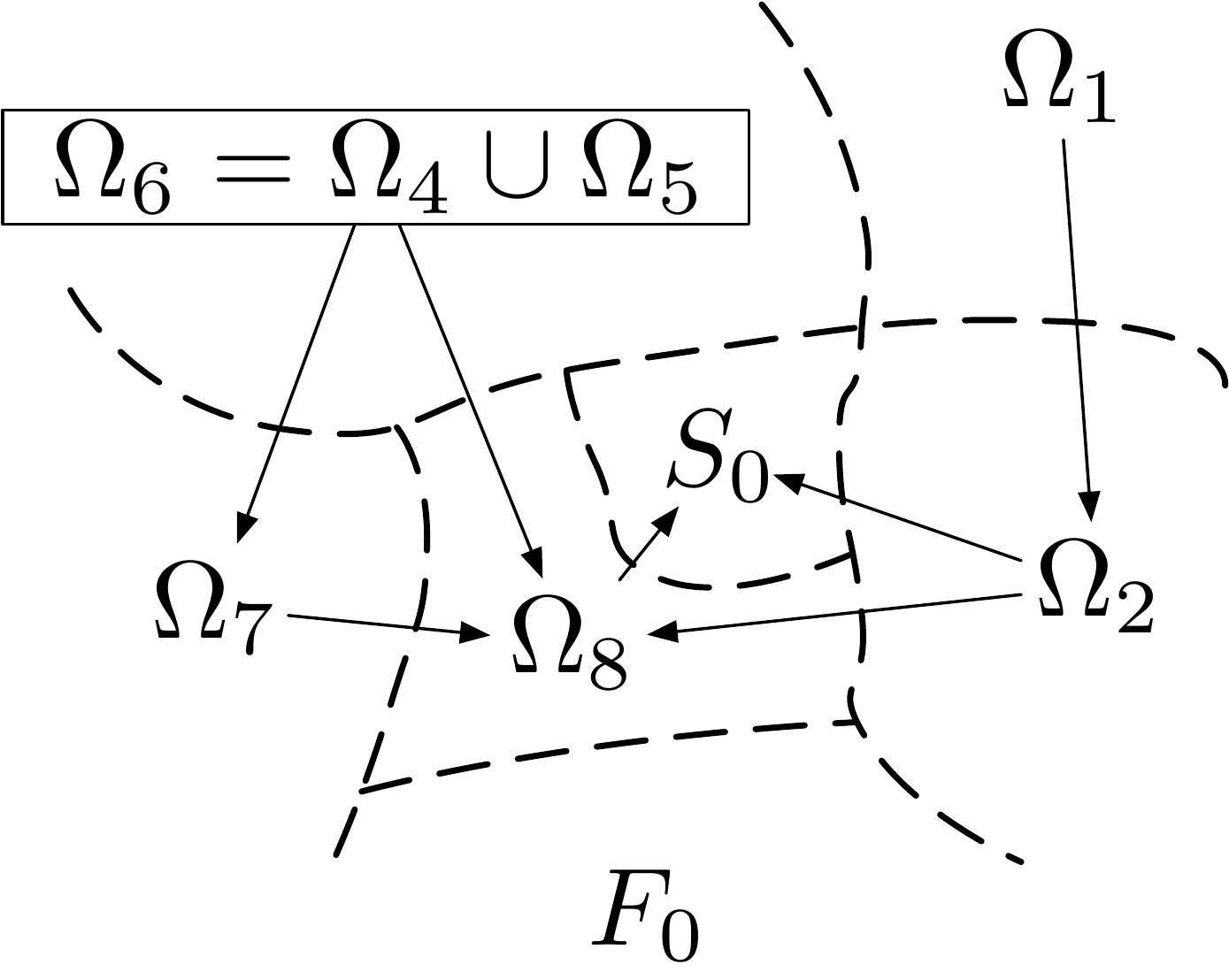}
                  \caption{}
     \end{subfigure}
     \vspace{5mm}
	\caption{The idea behind Theorem~\ref{thm_main}. If the state only transits between sets connected by arrows, and never stays indefinitely in any $\Omega_i$, it will eventually reach $S_0$.}	
	\label{operating_regions}
\end{figure} 

\subsection{The general result}

The sets $\Omega_i$ are defined below, and correspond to the regions where $u_i$ is running, see Lemma~\ref{lem_omega_executes}. But, in order to define $\Omega_i$, we first need to define the influence regions $I_i$, the parts of the state space where a change of $\bt_i$ might alter $u_0$, and to define  $I_i$ we need notation for parents and older siblings of a node. Some of these results are taken from \cite{colledanchiseHowBehaviorTrees2017,ogrenConvergenceAnalysisHybrid2020,sprague2021ctrl}.

\begin{definition}[parent and big brother of a node]
Given a node $i$, let $p(i)$ be the parent of  the node and
 $b(i)$ be the closest sibling to the left (the big brother) of the node.
 \end{definition}
 
 \noindent
 Note that $p(i)$ is undefined if $i$ is the root, and $b(i)$ is undefined if there is no sibling to the left.

\begin{definition}[Influence region]
The Influence Region $I_i$ of node $i$ 
 is defined as follows
\begin{align}
I_i &= X &\mbox{If } i \mbox{ is the root } \\
I_i &= I_{p(i)}  &\mbox{If } i  \mbox{ is the leftmost sibling and }\exists p(i)\\
I_i &= I_{b(i)} \cap S_{b(i)} &\mbox{If } p(i)  \mbox{ is a Sequence and } \exists b(i)  \label{eq_influence_seq}\\
I_i &= I_{b(i)} \cap F_{b(i)} &\mbox{If } p(i)  \mbox{ is a Fallback and  } \exists b(i) \label{eq_influence_fall}
 \end{align}
 \end{definition}

 Note that the influence region $I_i$ is the part of the state space where the design of $\bt_i=(u_i,r_i)$ influences the execution of $\bt_0$.
 Also note that $I_i$ is fundamentally different from $S_i, F_i, R_i$ in the sense that $I_i$ depends entirely on the part of $\bt_0$ that is outside $\bt_i$, the parent and siblings, while, on the other hand $S_i, F_i, R_i$ depends entirely on what is inside $\bt_i$ itself, $u_i(x)$.

\begin{lemma}
\label{lem_influence_region}
If $x \not \in I_i$ then changing the implementation of $u_i,r_i$ will not change the value of $u_0(x)$.
\end{lemma}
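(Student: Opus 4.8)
The plan is to prove the contrapositive-style claim by structural induction on the tree structure of $\bt_0$, tracking how the value of $u_0(x)$ is determined along the path from the root down to $\bt_i$. The key observation is that $u_0(x)$ is computed by a finite recursion: at each Sequence or Fallback node along the way, Definitions~\ref{def_sequence} and~\ref{def_fallback} select one child based only on the success/failure status of the \emph{older} sibling, and the design of $\bt_i$ can only affect $u_0(x)$ if the recursion actually descends into $\bt_i$.

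First I would set up the induction over the chain of ancestors $i, p(i), p(p(i)), \dots$ up to the root. The base case is $i$ being the root, where $I_i = X$ and the statement is vacuous. For the inductive step, suppose the claim holds for $p(i)$, i.e.\ if $x \notin I_{p(i)}$ then changing $u_{p(i)}, r_{p(i)}$ does not change $u_0(x)$. I then want to show the same for $i$. If $x \notin I_i$, there are two cases from the definition of $I_i$: either $x \notin I_{p(i)}$, in which case I need to argue that changing $u_i, r_i$ changes at most $u_{p(i)}, r_{p(i)}$ (never the part of the tree outside $p(i)$), so the inductive hypothesis closes the case; or $x \in I_{p(i)}$ but $i$ is not the leftmost sibling and $x$ fails the sibling condition \eqref{eq_influence_seq} or \eqref{eq_influence_fall}. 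In that second case, $p(i)$ is a Sequence with $x \notin S_{b(i)}$ (or a Fallback with $x \notin F_{b(i)}$), and then Definition~\ref{def_sequence} (resp.\ \ref{def_fallback}) says $r_{p(i)}(x), u_{p(i)}(x)$ are taken from $\bt_{b(i)}$ or some still-older sibling, not from $\bt_i$; since changing $u_i, r_i$ leaves $\bt_{b(i)}$ untouched, $r_{p(i)}(x), u_{p(i)}(x)$ are unchanged, and hence $u_0(x)$ is unchanged.

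A cleaner way to organize the same argument, which I would actually prefer for the write-up, is to prove a slightly stronger local statement: for every node $j$ on the path from the root to $i$ (inclusive), if $x \notin I_i$ then changing $u_i, r_i$ does not change $u_j(x)$ or $r_j(x)$. This is proved by downward induction on the depth of $j$ — it is trivially true at $j = i$ when $x \notin I_i$ reached via the sibling failure, and propagates up because at each step the selecting condition at $p(j)$ evaluates the same way (it only depends on $S_{b(j)}, F_{b(j)}$, which are outside $\bt_i$), so $u_{p(j)}(x), r_{p(j)}(x)$ are inherited from the same child as before. Taking $j$ to be the root gives the lemma. One should also note the trivial case where $x \notin I_i$ because the recursion down from the root never selects the child containing $i$ at all — this is subsumed because then some ancestor's selecting condition routes away from $\bt_i$.

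The main obstacle, I expect, is being careful about the bookkeeping between ``$\bt_i$'' as a subtree and the node $i$ as a point in the tree, and making sure the influence-region recursion is being unwound in exactly the direction the definition intends (from $i$ up to the root, with each clause peeling off one ancestor or one older sibling). In particular one must verify that whenever $x \in I_{p(i)}$ holds but the sibling condition in \eqref{eq_influence_seq}/\eqref{eq_influence_fall} fails, the Sequence/Fallback definition genuinely ignores $\bt_i$ at $x$ — this is immediate from \eqref{bts:eq:seq2} and \eqref{bts:eq:fall2} but deserves an explicit sentence. Everything else is a routine traversal argument and no real calculation is involved.
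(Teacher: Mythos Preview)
Your induction along the root-to-$i$ path, unwinding the Sequence/Fallback definitions at each step, is correct and self-contained. One bookkeeping point: when $i$ is not leftmost, the recursion in the definition of $I_i$ goes through $I_{b(i)}$, not directly through $I_{p(i)}$, so your dichotomy ``$x\notin I_{p(i)}$ versus $x\in I_{p(i)}$ but the sibling condition fails'' really requires first unrolling $I_i = I_{p(i)}\cap\bigcap_{j\text{ older sibling}} S_j$ (or $F_j$). You flag this yourself (``each clause peeling off one ancestor or one older sibling''), so the final write-up just needs to make that unrolling explicit.

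The paper takes a genuinely different and much shorter route: it forward-references Lemma~\ref{lem_omega_executes} (that $x\in\Omega_j$ implies $u_0(x)=u_j(x)$) and argues by an extremal trick --- reset $r_i$ to always return Running, which forces $R_i=X$ and hence $\Omega_i=I_i$; then $x\notin I_i$ means $x\notin\Omega_i$ even under this most aggressive choice, so some other subtree controls the execution. Your approach buys self-containment (no dependence on a lemma proved later) and is arguably more transparent about \emph{why} $I_i$ is the right set; the paper's approach buys brevity and ties the result directly to the operating-region picture, at the cost of a forward reference and a somewhat telegraphic final step (``another subtree is still controlling'') that your induction actually justifies in detail.
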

\begin{proof}
We will use Lemma~\ref{lem_omega_executes} below, that shows that if $x \in \Omega_i = I_i \cap R_i$, then $u_0(x)=u_i(x)$.
To maximize the influence of $u_i,r_i$ we change $r_i(x)$ to always return running, making $R_i=X$ and $\Omega_i = I_i$.
However, if $x \not \in I_i$ we have $x \not \in \Omega_i$ and another subtree is still controlling the execution.
\end{proof}

As seen above, $I_i$ depends on external factors and $S_i, F_i, R_i$ depends on internal factors. We will now define $\Omega_i$ such that $\Omega_i$ is the region where $u_0(x) \equiv u_i(x)$ and $r_i(x)=R$, that is the region where $\bt_i$ is controlling the execution.
\begin{definition}[Operating region]
\label{def_operating_region}
The Operating Region $\Omega_i$ of node $i$ 
 is defined as follows
 
\begin{equation}
 \Omega_i = I_i \cap R_i
\end{equation}

 \end{definition}

\begin{lemma}
\label{lem_operating_region}
 For a given node $j$, the operating regions of the children is a partitioning of $\Omega_j$, that is 
 \begin{align}
  \bigcup_{i: p(i)=j} \Omega_i &= \Omega_j, \\
   \Omega_i \cap \Omega_k &= \emptyset, \forall i,k: i\neq k, p(i)=p(k)
\end{align}
\end{lemma}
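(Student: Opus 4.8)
The claim concerns an internal (composition) node $j$; a leaf has no children, so there is nothing to prove there. Write $c_1,\dots,c_m$ for the children of $j$ listed from left to right, so that $p(c_\ell)=j$ for every $\ell$, $b(c_1)$ is undefined, and $b(c_\ell)=c_{\ell-1}$ for $\ell\ge 2$. I would carry out the argument for the case that $j$ is a Sequence; the Fallback case is word-for-word identical after swapping every $S_{c_s}$ for $F_{c_s}$, using \eqref{eq_influence_fall} in place of \eqref{eq_influence_seq} and the Fallback running-region identity in place of the Sequence one. The two ingredients are a closed form for $R_j$ and a closed form for each $I_{c_\ell}$, after which the lemma drops out by substitution.

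First, unfolding $\mbox{Sequence}(\bt_1,\dots,\bt_m)$ as the right-nested composition and applying the Sequence identity $R_0=R_1\cup(S_1\cap R_2)$ repeatedly, a short induction on the number of children (together with distributivity of $\cap$ over $\cup$) gives
\[
R_j \;=\; \bigcup_{\ell=1}^{m}\Bigl(\Bigl(\bigcap_{s=1}^{\ell-1} S_{c_s}\Bigr)\cap R_{c_\ell}\Bigr),
\]
with the convention that the empty intersection ($\ell=1$) equals $X$. Second, since $c_1$ is the leftmost sibling we have $I_{c_1}=I_{p(c_1)}=I_j$, and since $j$ is a Sequence with $b(c_\ell)=c_{\ell-1}$, \eqref{eq_influence_seq} gives $I_{c_\ell}=I_{c_{\ell-1}}\cap S_{c_{\ell-1}}$; a second induction on $\ell$ then yields
\[
I_{c_\ell} \;=\; I_j\cap\bigcap_{s=1}^{\ell-1} S_{c_s}.
\]

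Combining these with $\Omega_{c_\ell}=I_{c_\ell}\cap R_{c_\ell}$ from Definition~\ref{def_operating_region} gives $\Omega_{c_\ell}=I_j\cap\bigl(\bigcap_{s<\ell}S_{c_s}\bigr)\cap R_{c_\ell}$. Taking the union over $\ell=1,\dots,m$ and pulling $I_j$ out,
\[
\bigcup_{i:\,p(i)=j}\Omega_i \;=\; I_j\cap\bigcup_{\ell=1}^{m}\Bigl(\Bigl(\bigcap_{s<\ell}S_{c_s}\Bigr)\cap R_{c_\ell}\Bigr) \;=\; I_j\cap R_j \;=\; \Omega_j,
\]
which is the first assertion. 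For the second, fix $\ell<\ell'$: then $\Omega_{c_\ell}\subseteq R_{c_\ell}$, whereas $\Omega_{c_{\ell'}}\subseteq S_{c_\ell}$ because $\ell\le\ell'-1$, so $S_{c_\ell}$ appears in the intersection defining $\Omega_{c_{\ell'}}$. Since $R_{c_\ell}$, $S_{c_\ell}$, $F_{c_\ell}$ partition $X$ we have $R_{c_\ell}\cap S_{c_\ell}=\emptyset$, and therefore $\Omega_{c_\ell}\cap\Omega_{c_{\ell'}}=\emptyset$.

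The only real effort is bookkeeping in the two inductions and keeping the empty-intersection convention straight; the Fallback case introduces no new idea. I expect the main obstacle to be making the flattened list $c_1,\dots,c_m$ of children consistent with the right-nested binary definitions of Sequence and Fallback that the running-region identity and the Influence Region definition are phrased in — once that correspondence is pinned down, both closed forms, and hence the conclusion, are direct substitutions.
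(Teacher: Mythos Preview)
Your proof is correct and rests on the same idea as the paper's: the disjointness of $\Omega_{c_\ell}$ and $\Omega_{c_{\ell'}}$ comes from $R_{c_\ell}\cap S_{c_\ell}=\emptyset$ (respectively $R_{c_\ell}\cap F_{c_\ell}=\emptyset$ in the Fallback case), and the union identity from the running-region formula together with the recursive definition of the influence regions. The only difference is scope: the paper treats the binary case $m=2$ directly, writing out $\Omega_0,\Omega_1,\Omega_2$ explicitly for both Sequence and Fallback, whereas you derive closed forms for general $m$ via two short inductions before substituting. Your version thus also covers the flattened $m$-ary reading of the node, at the cost of the bookkeeping you flag; the paper's version is shorter but leaves the extension to $m>2$ children to the reader via the right-nesting convention.
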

\begin{proof}
Let the parent index be 0 and the two children indices be 1 and 2.
We need to show that this holds for both Sequence and Fallback compositions.
If the parent node is a Sequence we have that $R_0 = R_1 \bigcup (S_1 \cap R_2)$.
For the influence regions, assume that $I_0$ is given, which gives $I_1= I_0$ and $I_2= I_1 \cap S_1=I_0 \cap S_1$.
Thus we have 
 \begin{align}
 \Omega_0 &= I_0 \cap R_0 = I_0 \cap (R_1 \bigcup (S_1 \cap R_2)) \\
 \Omega_1 &= I_1 \cap R_1 = I_0 \cap R_1  \\
  \Omega_2 &= I_2 \cap R_2 = I_0 \cap S_1 \cap R_2 
\end{align}
This gives $\Omega_1 \cap \Omega_2 = I_0 \cap R_1 \cap  I_0 \cap S_1 \cap R_2 = \emptyset$, since $R_1 \cap S_1=\emptyset$.
Furthermore, 
$\Omega_1 \cup \Omega_2 = (I_0 \cap R_1) \cup  (I_0 \cap S_1 \cap R_2) =
I_0 \cap (R_1 \cup (S_1 \cap R_2))=
I_0 \cap R_0=
  \Omega_0$.
  
Similarly, if the parent node is a Fallback we have that $R_0 = R_1 \bigcup (F_1 \cap R_2)$.
For the influence regions, assume that $I_0$ is given, which gives $I_1= I_0$ and $I_2= I_1 \cap F_1=I_0 \cap F_1$.
Thus we have 
 \begin{align}
 \Omega_0 &= I_0 \cap R_0 = I_0 \cap (R_1 \bigcup (F_1 \cap R_2)) \\
 \Omega_1 &= I_1 \cap R_1 = I_0 \cap R_1  \\
  \Omega_2 &= I_2 \cap R_2 = I_0 \cap F_1 \cap R_2 
\end{align}
This gives $\Omega_1 \cap \Omega_2 = I_0 \cap R_1 \cap  I_0 \cap F_1 \cap R_2 = \emptyset$, since $R_1 \cap F_1=\emptyset$.
Furthermore, 
$\Omega_1 \cup \Omega_2 = (I_0 \cap R_1) \cup  (I_0 \cap F_1 \cap R_2) =
I_0 \cap (R_1 \cup (F_1 \cap R_2))=
I_0 \cap R_0=
  \Omega_0$.
\end{proof}

\begin{lemma}
\label{lem_operating_region_leaves}
 For a given subtree, the operating regions of the leaves is a partitioning of the operating region of the root.
 \end{lemma}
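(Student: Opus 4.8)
The plan is a short structural induction on the subtree, with Lemma~\ref{lem_operating_region} supplying the single inductive step; conceptually the statement is just the fact that refining each block of a partition by a partition of that block again yields a partition. Fix the subtree and let $j$ denote its root.

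For the base case, suppose $j$ is a leaf, i.e.\ it is defined directly through $u_j,r_j$ rather than as a Sequence or Fallback. Then the only leaf of the subtree is $j$ itself, and the claim reduces to the identity $\Omega_j=\Omega_j$, with pairwise disjointness holding vacuously since there are no two distinct leaves.

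For the inductive step, suppose $j$ is a Sequence or Fallback with children $\bt_i$, $i\in\{i:p(i)=j\}$ (finitely many, by the convention that collapses long compositions), each of which is the root of a strictly smaller subtree. By Lemma~\ref{lem_operating_region}, $\bigcup_{i:p(i)=j}\Omega_i=\Omega_j$ and $\Omega_i\cap\Omega_k=\emptyset$ whenever $i\neq k$. By the induction hypothesis, for each child $i$ the operating regions of the leaves of $\bt_i$ partition $\Omega_i$. Since a leaf of the BT belongs to the subtree rooted at $j$ exactly when it belongs to the subtree rooted at one and only one child of $j$, the leaves of the $j$-subtree are the disjoint union over $i$ of the leaves of the $i$-subtrees. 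Taking unions, the operating regions of all leaves of the $j$-subtree cover $\bigcup_{i:p(i)=j}\Omega_i=\Omega_j$; and two distinct such leaves have disjoint operating regions, either by the induction hypothesis (if they lie in the same child subtree) or because their operating regions are contained in the disjoint sets $\Omega_i$ and $\Omega_k$ (if they lie in different child subtrees). This closes the induction.

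The only point that needs care, and the closest thing to an obstacle, is that the influence regions $I_\ell$ entering $\Omega_\ell=I_\ell\cap R_\ell$ must be those induced by the ambient tree (equivalently, by the chosen subtree together with its inherited $I_j$); Lemma~\ref{lem_operating_region} is phrased precisely in this ``$I_j$ given'' form, so the recursive definition of the $I_\ell$ down the subtree is compatible with the induction hypothesis, and nothing beyond this bookkeeping is required.
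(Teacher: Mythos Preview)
Your proof is correct and is exactly the argument the paper has in mind: the paper's own proof is the single sentence ``A recursive application of Lemma~\ref{lem_operating_region},'' and your structural induction is precisely that recursion spelled out in full. Your closing remark about the influence regions being inherited from the ambient tree is a useful clarification that the paper leaves implicit.
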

\begin{proof}
 A recursive application of Lemma \ref{lem_operating_region}.
\end{proof}

As described above, we want to enable the convergence analysis to be done at different levels of abstraction, as illustrated in Figure~\ref{operating_regions}. Thus we make the following definition.

\begin{definition}
 A level of abstraction $L \subset \{0, 1, 2, \ldots\}$ is a set of indices such that 
 \begin{equation}
 X = \bigcup_{i \in L} \Omega_i \cup S_0 \cup F_0, 
\end{equation}
and $\Omega_i \cap \Omega_j = \emptyset, \forall i \neq j \in L$.
\end{definition}

\begin{lemma}
 The root is one level of abstraction, with $L=\{0\}$ and all the leaves is another level of abstraction with $L=\{\mbox{indices of leaves}\}$.
 \end{lemma}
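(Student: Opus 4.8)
The plan is to handle the two claimed levels of abstraction separately, and in both cases reduce everything to the observation that $\{R_0,S_0,F_0\}$ is a partition of $X$, together with the results already established for operating regions.

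First I would treat $L=\{0\}$. Since the root has no parent, the Influence Region definition gives $I_0 = X$, so the Operating Region of the root is $\Omega_0 = I_0 \cap R_0 = R_0$. Because $r_0 : X \rightarrow \{\mathcal{R},\mathcal{S},\mathcal{F}\}$ is a total function, the sets $R_0$, $S_0$, $F_0$ form a partition of $X$ (this is exactly the partitioning noted after Definition~\ref{def_bt}), hence $X = \Omega_0 \cup S_0 \cup F_0$. The disjointness requirement $\Omega_i \cap \Omega_j = \emptyset$ for $i \neq j \in L$ holds vacuously for a singleton $L$, so $\{0\}$ is a level of abstraction.

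Next I would treat $L$ equal to the set of leaf indices. Here I would invoke Lemma~\ref{lem_operating_region_leaves} with the subtree taken to be the entire BT $\bt_0$: it states that the operating regions of the leaves partition the operating region of the root, i.e. $\bigcup_{i \text{ a leaf}} \Omega_i = \Omega_0$ and $\Omega_i \cap \Omega_j = \emptyset$ for distinct leaves $i,j$. Combining this with the identity $\Omega_0 = R_0$ from the previous step and the partition $X = R_0 \cup S_0 \cup F_0$, I obtain $X = \bigcup_{i \text{ a leaf}} \Omega_i \cup S_0 \cup F_0$ with the $\Omega_i$ pairwise disjoint, which is precisely the definition of a level of abstraction.

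There is essentially no hard step: the content is entirely bookkeeping. The only points to be careful about are that Lemma~\ref{lem_operating_region_leaves} is applied to the whole tree rather than to a proper subtree, and that $I_0 = X$ is what lets us identify $\Omega_0$ with $R_0$; everything else follows from the defining property that a BT's metadata function induces a partition of the state space into running, success, and failure regions.
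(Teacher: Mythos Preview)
Your proposal is correct and follows essentially the same route as the paper: compute $\Omega_0 = I_0 \cap R_0 = X \cap R_0 = R_0$, use the partition $X = R_0 \cup S_0 \cup F_0$ for the root case, and then invoke Lemma~\ref{lem_operating_region_leaves} to transfer the result to the leaves via $\bigcup_{i \text{ leaf}} \Omega_i = \Omega_0$. Your version is in fact slightly more complete, since you explicitly verify the disjointness condition in the definition of a level of abstraction, which the paper's proof leaves implicit.
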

\begin{proof}
For the root we have $\Omega_0=I_0 \cap R_0= X \cap R_0=R_0$ this makes $\bigcup_{i \in L} \Omega_i \cup S_0 \cup F_0 = R_0 \cup S_0 \cup F_0=X$. 
If it holds for the root it must also hold for the leaves, since $\bigcup_{i \in L} \Omega_i = \Omega_0$
by Lemma \ref{lem_operating_region_leaves}.
\end{proof}

\begin{lemma}
\label{lem_omega_executes}
 If $\bt_i$ is a subtree of $\bt_0$, then $x \in \Omega_i$, implies that $u_0(x)=u_i(x)$, 
 that is controller $i$ is executed while inside $\Omega_i$ 
 \end{lemma}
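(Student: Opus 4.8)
The plan is to prove the statement by induction along the path from the root $\bt_0$ down to the node $i$, working throughout with the binary Sequence and Fallback operators of Definitions~\ref{def_sequence} and~\ref{def_fallback} (treating $n$-ary compositions as mere notation), so that every interior node has exactly a left child, which is its leftmost sibling, and a right child, which has a big brother.

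\textbf{Base case.} If $i=0$ then $\Omega_0 = I_0 \cap R_0 = X \cap R_0 = R_0$, so $x \in \Omega_i$ trivially gives $u_0(x)=u_i(x)$. I also note, for use throughout, that by repeated application of Lemma~\ref{lem_operating_region} we have $\Omega_i \subseteq \Omega_{p(i)} \subseteq \cdots \subseteq \Omega_0 = R_0$, so whenever $x \in \Omega_i$ the root is running and $u_0(x)$ is indeed the control executed by Definition~\ref{def_execution}; no extra hypothesis on $x$ is needed.

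\textbf{Inductive step.} Assume the claim holds for $p(i)$, i.e.\ $x \in \Omega_{p(i)} \Rightarrow u_0(x)=u_{p(i)}(x)$. Take $x \in \Omega_i$; then $x \in \Omega_{p(i)}$ by Lemma~\ref{lem_operating_region}, so $u_0(x)=u_{p(i)}(x)$ and it remains only to show $u_{p(i)}(x)=u_i(x)$. I split into four cases. If $\bt_i$ is the left child of a Sequence $\bt_{p(i)}$, then $x \in \Omega_i \subseteq R_i$ and the partition $X = R_i \cup S_i \cup F_i$ give $x \notin S_1 = S_i$, so the \enquote{else} branch~(\ref{bts:eq:seq2}) applies and $u_{p(i)}(x)=u_i(x)$. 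If $\bt_i$ is the right child of a Sequence with big brother $k=b(i)$, then by~(\ref{eq_influence_seq}) $x \in I_i = I_k \cap S_k \subseteq S_k = S_1$, so branch~(\ref{bts:eq:seq1}) applies and again $u_{p(i)}(x)=u_i(x)$. The two Fallback cases are identical with $S$ replaced by $F$: the left-child case uses $x \notin F_i$ from $x \in R_i$ together with~(\ref{bts:eq:fall2}), and the right-child case uses $x \in F_{b(i)}$ from~(\ref{eq_influence_fall}) together with~(\ref{bts:eq:fall1}). This closes the induction.

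The argument is essentially bookkeeping once the induction is set up correctly; the one point that deserves care is aligning the influence-region recursion (\ref{eq_influence_seq})--(\ref{eq_influence_fall}) with the exact branching condition in the Sequence/Fallback definitions. Concretely, for a right child the condition that selects $u_i$ inside $u_{p(i)}$ --- namely $x \in S_{b(i)}$ for a Sequence or $x \in F_{b(i)}$ for a Fallback --- is precisely what membership in $I_i$ encodes, whereas for a left child it is the running condition $x \in R_i$ that rules out the other branch. If instead one reads $n$-ary nodes directly, $I_i$ unfolds to $I_{p(i)} \cap S_{c_1} \cap \cdots \cap S_{c_{m-1}}$ for the $m$-th child $c_m$, which must then be matched against the nested definition; phrasing the proof in terms of the binary operators avoids this extra layer of indexing.
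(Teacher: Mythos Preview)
Your proof is correct and follows essentially the same inductive strategy as the paper's own proof: establish the base case at the root, then for the inductive step use $\Omega_i \subset \Omega_{p(i)}$ to reduce to showing $u_{p(i)}(x)=u_i(x)$, splitting into the leftmost-child and non-leftmost-child cases for Sequence and Fallback separately. Your version is in fact slightly more careful in matching the branch conditions of Definitions~\ref{def_sequence} and~\ref{def_fallback} to the correct equation labels, and in making explicit the reduction to binary operators.
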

\begin{proof}
It holds for the root since $\Omega_0=I_0 \cap R_0=R_0$.
We will now show that if it holds for a parent, it will also hold for a child.
Assume it holds for the parent $j=p(i)$. It remains to show that $u_j(x)=u_i(x)$. We know that $x \in \Omega_i \subset \Omega_j$ and $\Omega_i = I_i \cap R_i$. 
If $j$ is the leftmost child then $x \in R_i$ implies $x \not \in S_i \cup F_i$ which gives $u_j(x)=u_i(x)$ by equation  (\ref{bts:eq:seq1}) and  (\ref{bts:eq:fall1}).
Assume the parent is a Sequence node.
If $j$ is not the leftmost child then $x \in I_i$ implies $x \in S_{b(i)}$ which gives $u_j(x)=u_i(x)$ by Equation  (\ref{bts:eq:seq2}) and (\ref{eq_influence_seq}).
Conversely, assume the parent is a Fallback node.
If $j$ is not the leftmost child then $x \in I_i$ implies $x \in F_{b(i)}$ which gives $u_j(x)=u_i(x)$ by Equation (\ref{bts:eq:fall2}) and (\ref{eq_influence_fall}).

\end{proof}

Given these concepts, we can now formulate our main theorem on convergence of BTs, inspired by \cite{burridgeSequentialCompositionDynamically1999,connerIntegratedPlanningControl2006,reistSimulationbasedLQRtreesInput2010}. The idea is that if
the state moves through the operating regions $\Omega_i$ in strictly increasing order, without staying longer than $\tau$ in any region, and the only other allowed region is $S_0$, then the system will reach $S_0$ in finite time. Formally, we write

\begin{theorem}[Convergence of BTs]
\label{thm_main}
Given a BT, an external constraint region $\bar C\subset X$  that is to be kept invariant, and a level of abstraction $L$.
 If there exists a re-labelling of the $N$ nodes in $L$ such that

\begin{equation}
\label{eq_ci_invariant}
C_i =  \bigl( \bigl(\bigcup_{j \in L, j \geq i} \Omega_j\bigr) \cup S_0\bigr) \cap \bar C
\end{equation}
is invariant under $u_i$ for all $i \in L$, and there exists a $\tau > 0$ such that if $x(t)\in \Omega_i$ then $x(t+\tau) \not \in \Omega_i$, then there exist a time $t' \leq N\tau$ such that if $x(0)\in C_1$, then $x(t') \in S_0$.
\end{theorem}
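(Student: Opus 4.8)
My plan is to exploit the nested structure of the sets $C_i$ together with the fact, already established in Lemma~\ref{lem_omega_executes}, that the closed-loop system runs $u_j$ precisely when the state is in $\Omega_j$; the hypothesised invariance of each $C_i$ under $u_i$ will then force the state to march monotonically up the chain $C_1\supseteq C_2\supseteq\cdots\supseteq C_N\supseteq S_0\cap\bar C$ until it lands in $S_0$. First I would fix the re-labelling $1,\dots,N$ of the nodes of $L$ promised by the hypothesis and record three elementary facts. (i) $C_N\subseteq\cdots\subseteq C_1$ and $S_0\cap\bar C\subseteq C_i$ for every $i$, because $S_0$ is one of the terms defining each $C_i$ while $\{j\in L:j\ge i\}$ shrinks in $i$. (ii) $\Omega_j\cap\bar C\subseteq C_j$, since $\Omega_j$ is itself a term of $C_j$. (iii) $C_1=(R_0\cup S_0)\cap\bar C=(X\setminus F_0)\cap\bar C$, using $\Omega_i\subseteq\Omega_0=R_0$ and the definition of a level of abstraction to get $\bigcup_{i\in L}\Omega_i=R_0$. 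Fact (iii) already shows that any trajectory that never leaves $C_1$ automatically stays inside the safety set $\bar C$ and avoids $F_0$.

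The heart of the argument is a monotonicity claim: along any trajectory with $x(0)\in C_1$, as long as $S_0$ has not yet been reached the state lies in a well-defined operating region $\Omega_{\nu(t)}$ (well-defined because $x(t)\in C_1$ rules out $F_0$ by (iii) and $x(t)\notin S_0$ by assumption), and $t\mapsto\nu(t)$ is non-decreasing. I would prove this one sojourn at a time: while $x$ sits in $\Omega_j$, Lemma~\ref{lem_omega_executes} gives $u_0(x)=u_j(x)$, so the dynamics is $\dot x=f(x,u_j(x))$; since $x\in\Omega_j\cap\bar C\subseteq C_j$ by (ii) and $C_j$ is invariant under $u_j$, the state cannot leave $C_j$ before it leaves $\Omega_j$. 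Hence at the first time the state exits $\Omega_j$ it is still in $C_j$, so it lies either in $S_0$ or in some $\Omega_l\cap\bar C$ with $l>j$ --- these being the only possibilities in $C_j\setminus\Omega_j$. Iterating over the successive sojourns shows that $\nu$ only ever increases and that the trajectory never leaves $C_1$.

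Finally I would combine this with the dwell-time hypothesis. If $x(0)\in S_0$ take $t'=0$; otherwise $\nu(0)\ge 1$. Suppose, for contradiction, that $S_0$ is not reached on $[0,N\tau]$. Then for each $k\in\{0,\dots,N-1\}$ the index $\nu(k\tau)$ is defined, and the hypothesis that $x(t)\in\Omega_i$ implies $x(t+\tau)\notin\Omega_i$, combined with monotonicity, gives $\nu((k+1)\tau)\ge\nu(k\tau)+1$; inductively $\nu((N-1)\tau)\ge N$, hence $\nu((N-1)\tau)=N$ since $\nu\le N$. But $C_N=(\Omega_N\cup S_0)\cap\bar C$, so the only way for the state to leave $\Omega_N$ while staying in $C_N$ is to enter $S_0$, whereas the dwell-time bound forces $x(N\tau)\notin\Omega_N$; hence the trajectory reaches $S_0$ at some time in $((N-1)\tau,N\tau]$, contradicting the supposition. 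Therefore $S_0$ is reached at some $t'\le N\tau$, which is the claim.

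I expect the genuinely delicate part to be precisely what this clean skeleton glosses over on the analysis side of switched/discontinuous dynamics: making the notion of the ``first exit time'' from $\Omega_j$ and the induced switching sequence rigorous, ensuring the state at such an exit time really does lie in the (closed part of the) invariant set $C_j$, and handling states sitting on shared boundaries of the partition $\{\Omega_i\}$. The paper deliberately packages these issues into the standing existence-and-uniqueness Assumption and otherwise declares them out of scope, so a complete write-up would invoke that Assumption exactly at the point where the sojourn-by-sojourn induction is set up.
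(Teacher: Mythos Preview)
Your proposal is correct and follows essentially the same approach as the paper: use the invariance of $C_i$ under $u_i$ to conclude that when the state leaves $\Omega_i$ it must land in some $\Omega_j$ with $j>i$ or in $S_0$, then bound the number of such transitions by $N$ via the dwell-time hypothesis. The paper's own proof is a two-sentence sketch of exactly this argument; your version is simply a much more careful and explicit unpacking of the same idea, including the nested structure of the $C_i$, the role of Lemma~\ref{lem_omega_executes}, and the honest acknowledgment of the switching/boundary subtleties that the paper delegates to its standing Assumption.
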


\begin{proof}
  %We immediately see that by (\ref{eq_ci_invariant}) $C_1 \supset C_2 \supset  \ldots \supset C_N$,
  %with $C_1= \bigl( \bigl(\bigcup_{j \in L,} \Omega_j\bigr) \cup S_0\bigr) \cap C_{p(i)}$ and 
  %$C_N=  \bigl(\Omega_N \cup S_0\bigr) \cap C_{p(i)}$.

We have that if $x(t)\in \Omega_i$ then $x(t+\tau) \not \in \Omega_i$, but $C_i$ is invariant under $u_i$, so either $x(t+\tau)\in \Omega_j, j>i$ or 
$x(t+\tau)\in S_0$. Thus there can be at most $N$ transitions before $x(t)\in S_0$, in total taking at most time $N\tau$.
\end{proof}

By saying that a set $B \subset X$ is invariant under $u_i$ we mean that if $x(0) \in B$ and $\dot x = f(x,u_i(x))$, then $x(t) \in B$ for all $t>0$, and similarly for discrete time executions with $x_{t+1} = f(x_t,u_i(x))$.

\begin{remark2}
 Note that the challenge in proving convergence for BTs now lies in choosing a level of abstraction, re-ordering the nodes in $L$, and designing $u_i$ such that $C_i$ is invariant and $x(t+\tau) \not \in \Omega_i$.
\end{remark2}

\begin{remark2}
The purpose of the external constraint region $\bar C$ is to enable separate analysis of a BT that is then being used as a subtree of another BT. It this is not needed, set $\bar C = X$.
 \end{remark2}

The deterministic analysis above can be complemented with a probabilistic result from \cite{paxtonRepresentingRobotTask2019}.

\begin{lemma}[Probabilistic transitions]
 If the execution of Definition~\ref{def_execution} was replaced by non-deterministric transitions, and the controllers $u_i$ are such that undesired transitions, from an $\Omega_i$ to an $\Omega_j$ with $j<i$, happen with a probability $1-p^i$, with $0< p \leq p^i < 1$, then the expected number of transitions $T$ before reaching $S_0$ is bounded above, $E(T) \leq N/p^N$, and the probability of reaching the goal with at most $k$ transitions is $P_k=1-\gamma^{k+1}$, with $\gamma=1-p^N$, which makes $P_\infty= 1$.
\end{lemma}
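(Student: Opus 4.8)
The plan is to pass from the continuous-time non-deterministic execution at the level of abstraction $L$ to a finite absorbing Markov chain on the ``macro-states'' $\{1,\dots,N\}\cup\{S_0\}$, where macro-state $i$ stands for $x\in\Omega_i$ and $S_0$ is absorbing, and then to bound the absorption time $T$ by comparison with a geometric random variable. The first step is to extract the one structural fact that drives everything: by hypothesis a transition out of $\Omega_i$ is ``undesired'' — i.e.\ it lands in some $\Omega_j$ with $j<i$ — with probability $1-p^i$, so with probability $p^i\ge p$ it is ``desired,'' and, under the re-labelling of Theorem~\ref{thm_main} (whose invariance structure rules out $j<i$ transitions in the deterministic case), a desired transition leaves $\Omega_i$ for some $\Omega_j$ with $j>i$ or directly for $S_0$. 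Consequently a run of $N$ consecutive desired transitions, started from \emph{any} macro-state, must reach $S_0$, since the index strictly increases along such a run and is bounded by $N$.

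Next I would replace every $p^i$ by the uniform lower bound $p$ and group the transitions into consecutive blocks of $N$ steps. Conditioning (via the Markov property) on the macro-state at the start of a block, the probability that $S_0$ is reached by the end of that block is at least $p^{N}=:1-\gamma$, since it suffices that the steps of the block, for as long as the chain has not yet been absorbed, are all desired. Hence, writing $B$ for the number of blocks needed to reach $S_0$, we get $P(B>m)\le\gamma^{m}$, i.e.\ $B$ is stochastically dominated by a Geometric$(1-\gamma)$ variable. Since $T\le NB$, summing the tail gives $E(T)\le N\,E(B)\le N\sum_{m\ge 0}\gamma^{m}=N/(1-\gamma)=N/p^{N}$; the same tail estimate gives a lower bound of the form $1-\gamma^{\lfloor k/N\rfloor}$ on $P(T\le k)$, which already forces $P_\infty=1$, and refining the per-transition accounting as in \cite{paxtonRepresentingRobotTask2019} yields the stated $P_k=1-\gamma^{k+1}$.

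The step I expect to be the real obstacle is controlling the backward jumps: the per-step success probabilities $p^i$ depend on the current (and changing) macro-state, and an undesired transition can erase an arbitrary amount of accumulated progress, so the index process is neither i.i.d.\ nor, in general, a submartingale (we do not assume $p\ge 1/2$). The device that circumvents this is precisely the worst-case block bound above — pessimistically setting every $p^i$ to $p$ and exploiting that $N$ favorable steps in a row are decisive regardless of where the state currently sits — which, after conditioning on the state at block boundaries, turns the blocks into genuinely independent Bernoulli$(1-\gamma)$ trials and reduces the claim to an elementary geometric-series computation.
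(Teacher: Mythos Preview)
The paper does not prove this lemma in-house: its entire proof is the sentence ``See \cite{paxtonRepresentingRobotTask2019}.'' So there is no in-paper argument to compare yours against; what you have written is strictly more than what the paper supplies.

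Your block-of-$N$ reduction is correct and is the natural route to the expectation bound and to $P_\infty=1$. The key inequality --- that, conditionally on not yet being absorbed, a block of $N$ steps absorbs with probability at least $p^N$ --- follows by iterated conditioning exactly as you say, and stochastic domination of $B$ by a Geometric$(1-\gamma)$ variable then gives $E(T)\le N\,E(B)\le N/p^N$. One small wording issue: the blocks are not ``genuinely independent Bernoulli$(1-\gamma)$ trials,'' since the per-block absorption probability depends on the state at the block boundary; what you actually have (and all you need) is the uniform conditional lower bound $1-\gamma$, which is enough for the stochastic-domination / tail-sum argument you run.

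On $P_k=1-\gamma^{k+1}$: you are right that your block estimate only delivers $P(T\le k)\ge 1-\gamma^{\lfloor k/N\rfloor}$, which is weaker, and you are honest about deferring the sharper per-transition form to \cite{paxtonRepresentingRobotTask2019}. That is exactly what the paper itself does, so no more can be asked of you here.
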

\begin{proof}
 See  \cite{paxtonRepresentingRobotTask2019}.
\end{proof}

The Lemma above can be interpreted in two ways. Either for nondeterministic executions, as stated in the Lemma, or for deterministic executions, where an un-modeled external agent moves things around, thereby causing a finite set of jumps in the state. Also note that the expected number of transitions gives an upper bound on the expected convergence time of $\tau N/p^N$.

\subsection{Three examples}
We will now apply Theorem \ref{thm_main}, to the sequence of desired goals in Figure~\ref{proof_illustrations}(a), a fallback of actions where each one is designed to satisfy the preconditions of the one to the left in Figure~\ref{proof_illustrations}(b)
 and the more complex mobile manipulation BT from Figure~\ref{bt_big}.
The resulting sets are shown in Table \ref{tab_ex_thm}.
As can be seen, the sets $C_i$ to be kept invariant are often not that complex, and creating controllers $u_i$ to satisfy them is is often reasonable.

 \begin{figure}[!h]
	\centering
	\begin{subfigure}[b]{0.45\textwidth}
         \centering
         \includegraphics[width=\textwidth]{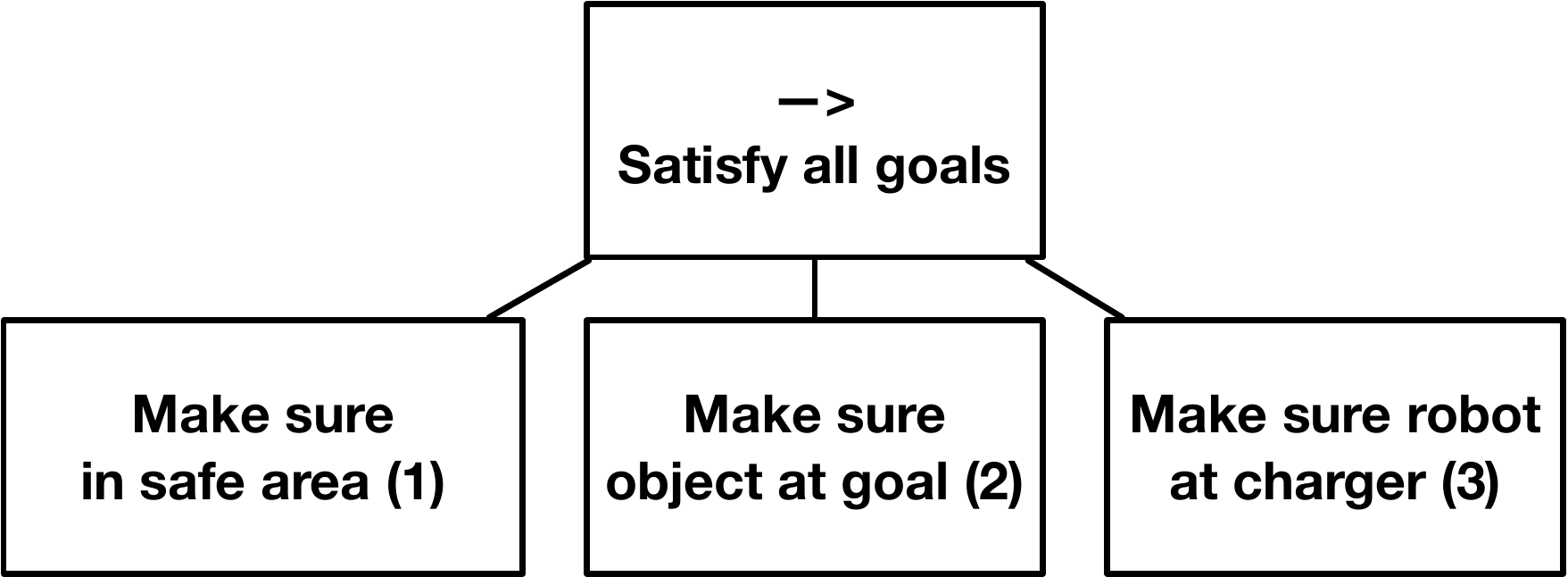}
         \caption{}
      \end{subfigure}
     \hfill
     \begin{subfigure}[b]{0.45\textwidth}
         \centering
         \includegraphics[width=\textwidth]{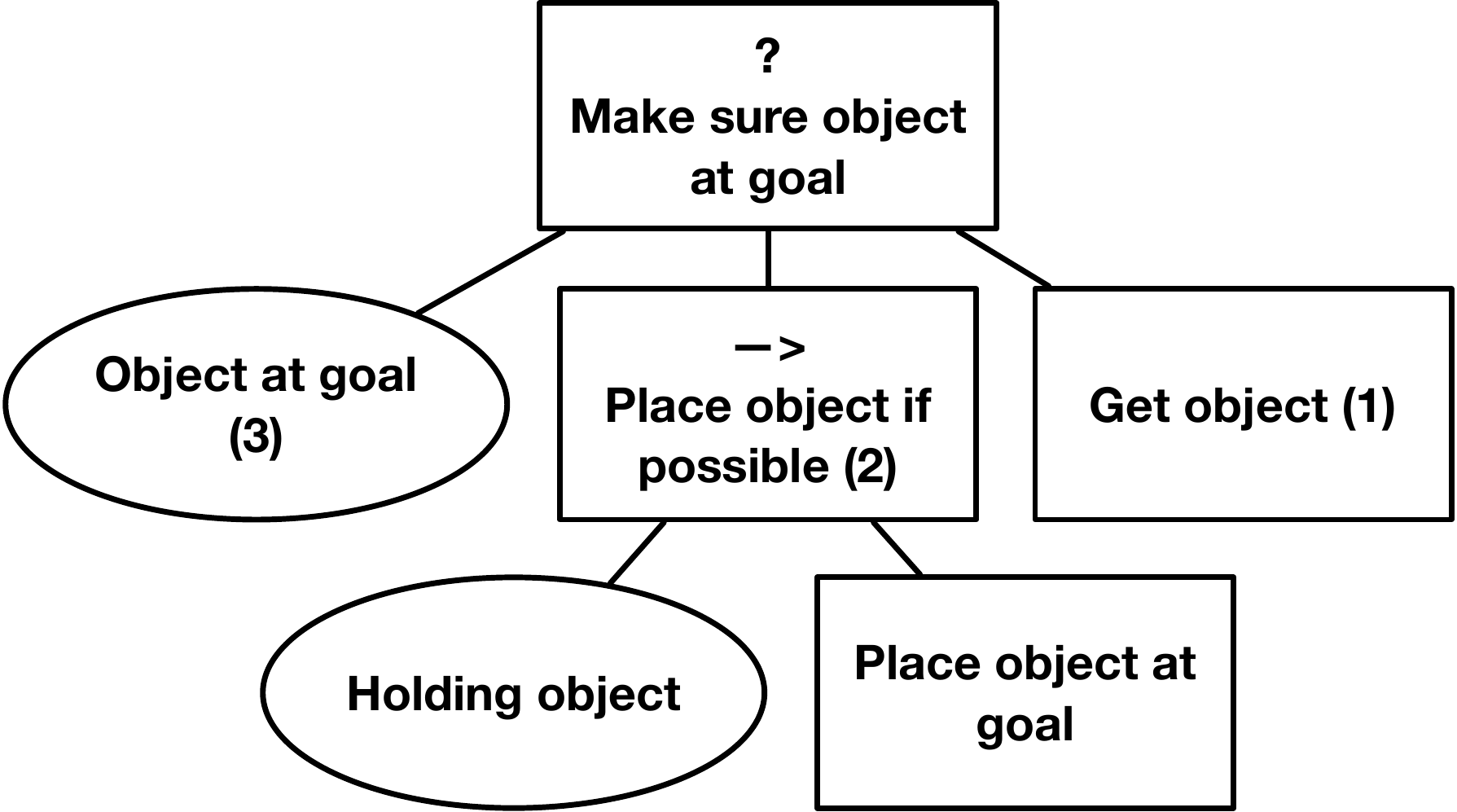}
                  \caption{}
     \end{subfigure}
     \vspace{5mm}
	\caption{Two examples to apply Theorem \ref{thm_main} on. Note that the index labels on the nodes are going from left to right for the sequence, and right to left for the fallback. This is a result of the re-labeling that is done when applying Theorem~\ref{thm_main}.}	
	\label{proof_illustrations}
\end{figure}

\begin{lemma}[Standard sequence]
If $\bt_0=\mbox{Sequence}(\bt_1, \ldots, \bt_N)$, we have

%as illustrated in Figure~\ref{proof_illustrations}(a), the set
%$C_i = \bigl( \bigl(\bigcup_{j \geq i} \Omega_j\bigr) \cup S_0\bigr) \cap C_{p(i)}$ that needs to be invariant 
%corresponds to  
\begin{align}
I_i &= \bigcap_{j < i} S_{j} \cap I_{p(i)} \\
 \Omega_i &=  \bigcap_{j < i} S_{j} \cap I_{p(i)} \cap R_i  \\
 C_i &= (\bigcap_{j < i} S_{j} \cup S_0) \cap \bar C
 \end{align}
\end{lemma}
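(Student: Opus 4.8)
The plan is to prove the three formulas for the standard sequence $\bt_0 = \mbox{Sequence}(\bt_1,\ldots,\bt_N)$ by straightforward induction on the position $i$ of the child, using the recursive definitions of $I_i$ (Definition of Influence region) and $\Omega_i$ (Definition of Operating region), together with Lemma~\ref{lem_operating_region}. The key observation is that, after the notational expansion $\mbox{Sequence}(\bt_1,\ldots,\bt_N) = \mbox{Sequence}(\bt_1,\mbox{Sequence}(\bt_2,\ldots,\bt_N))$, all of $\bt_1,\ldots,\bt_N$ share the single parent $\bt_0$ in the flattened tree, so $p(i) = 0$ for each $i$, and the big brother of $\bt_i$ is $\bt_{i-1}$ for $i \geq 2$, while $\bt_1$ is the leftmost sibling.

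First I would establish the formula for $I_i$. The base case is $i=1$: since $\bt_1$ is the leftmost sibling with $p(1)=0$, the influence-region definition gives $I_1 = I_{p(1)} = I_0$, and the empty intersection $\bigcap_{j<1} S_j = X$ makes this agree with the claimed $I_1 = X \cap I_{p(1)} = I_{p(1)}$. For the inductive step, suppose $I_{i-1} = \bigcap_{j < i-1} S_j \cap I_{p(i)}$ (noting $p(i-1)=p(i)=0$). Since the parent $\bt_0$ is a Sequence and $b(i) = i-1$ exists, equation~(\ref{eq_influence_seq}) gives $I_i = I_{b(i)} \cap S_{b(i)} = I_{i-1} \cap S_{i-1} = \bigl(\bigcap_{j<i-1} S_j \cap I_{p(i)}\bigr) \cap S_{i-1} = \bigcap_{j<i} S_j \cap I_{p(i)}$, as desired. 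The formula for $\Omega_i$ is then immediate from Definition~\ref{def_operating_region}: $\Omega_i = I_i \cap R_i = \bigcap_{j<i} S_j \cap I_{p(i)} \cap R_i$.

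Finally I would derive $C_i$ by substituting the $\Omega_j$ formula into~(\ref{eq_ci_invariant}) with the level of abstraction $L = \{1,\ldots,N\}$ and $p(j)=0$, $I_0 = X$ for all these nodes. One computes $\bigcup_{j \geq i} \Omega_j = \bigcup_{j \geq i} \bigl(\bigcap_{\ell < j} S_\ell \cap R_j\bigr)$. The claim is that this union equals $\bigcap_{j<i} S_j$ intersected with the complement of $S_0$ within that prefix-success region — more precisely, I would show $\bigl(\bigcup_{j\geq i}\Omega_j\bigr) \cup S_0 = \bigl(\bigcap_{j<i} S_j\bigr) \cup S_0$. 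This uses that $\{\Omega_i,\ldots,\Omega_N,S_0,F_0\}$ partitions $X$ (the level-of-abstraction property), so within the set $\bigcap_{j<i} S_j = I_i$ — which is $\Omega_i$'s influence region — the subsets $\Omega_i,\ldots,\Omega_N$ together with $S_0$ and the appropriate failure region tile it; since $\bigcap_{j<i}S_j \cap F_0$ turns out to be contained in the part we quotient out or is empty under the standard reading, the union telescopes. Intersecting with $\bar C$ then yields $C_i = \bigl(\bigcap_{j<i} S_j \cup S_0\bigr) \cap \bar C$.

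The main obstacle I anticipate is the last step: correctly handling how $S_0$ and $F_0$ interact with the nested prefix-success sets when the union of operating regions is collapsed. In particular one must be careful that $S_0 = \bigcap_{j=1}^N S_j$ for a sequence (by repeated application of Lemma~\ref{eq_seq_s}), so $S_0 \subset \bigcap_{j<i} S_j$, which is exactly what makes $\bigl(\bigcup_{j\geq i}\Omega_j\bigr)\cup S_0 = \bigcap_{j<i}S_j$ once the failure region $F_0 \cap \bigcap_{j<i}S_j$ is accounted for; verifying that this residual failure set does not spoil the identity (and precisely stating the formula modulo $F_0$, as the theorem's invariance condition already excludes $F_0$) is where the argument needs the most care. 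Everything else is bookkeeping with the recursive definitions.
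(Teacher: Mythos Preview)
Your approach is essentially the same as the paper's: the formulas for $I_i$ and $\Omega_i$ follow by induction from the influence-region definition (the paper simply declares them ``clear from the definition of a sequence node''), and for $C_i$ the paper, like you, reduces the claim to comparing $\bigcup_{j\geq i}\Omega_j$ with $\bigcap_{j<i}S_j$ via the partition property of the $\Omega_j$. Your caution about the $F_0$ residual is in fact well-placed: the paper asserts the straight equality $\bigcup_{j\geq i}\Omega_j = \bigcap_{j<i}S_j$ and argues the $\supseteq$ direction by saying that $x\in\bigcap_{j<i}S_j$ forces $x\notin\Omega_j$ for $j<i$ and hence $x\in\Omega_k$ for some $k\geq i$ by the partition of $\Omega_{p(i)}$ --- but this tacitly assumes $x\in\Omega_0=R_0$ and so ignores points of $\bigcap_{j<i}S_j$ lying in $F_0$; you are therefore being more careful than the paper on exactly the point you flagged.
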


\begin{proof}
$I_i, \Omega_i$ are clear from the definition of a sequence node, only executing the next child if the previous succeeds.
From Theorem~\ref{thm_main} we have that $C_i = \bigl( \bigl(\bigcup_{j \geq i} \Omega_j\bigr) \cup S_0\bigr) \cap \bar C$.
Thus we need to show that $\bigcup_{j \geq i} \Omega_j = \bigcap_{j < i} S_{j}$.
Assume $x\in \bigcap_{j < i} S_{j} $ then $x \in S_j \forall j<i$ thus $x \not \in \Omega_j \forall j<i$ thus $\exists k\geq i: x \in \Omega_k$, since $\Omega_i$ is a partition of $\Omega_{p(i)}$.
Conversely, if $x\in \bigcup_{j \geq i} \Omega_j$ then $\exists k\geq i: x \in \Omega_k$ and thus $x\in \bigcap_{j < i} S_{j} $.
\end{proof}

\begin{table}
\caption{Three examples of applying Theorem \ref{thm_main}.}
\label{tab_ex_thm}
 \begin{center}
\begin{tabular}{ |p{2cm}|  p{3cm}|  p{3cm}|  p{3cm}| }
\hline
 Name & Objective & Operating region $\Omega_i$ & To keep invariant $C_i$ \\ 
 \hline
 \hline
  Figure \ref{proof_illustrations}(a)&&&\\
 \hline
 $\bt_1$:  Make sure in safe area &  \mbox{in safe area} & $\lnot \mbox{in safe area}$ & $\emptyset$ \\ 
 \hline
 $\bt_2$: Make sure object at goal &  \mbox{object at goal} &$\mbox{in safe area} \land \lnot \mbox{object at goal}$ & $\mbox{in safe area}$ \\  
 \hline
 $\bt_3$: Make sure robot at charger&  \mbox{robot at charger} &$\mbox{in safe area} \land  \mbox{object at goal}\land \lnot  \mbox{robot at charger}$ & $\mbox{in safe area} \land  \mbox{object at goal}    $\\
\hline
\hline
 Figure \ref{proof_illustrations}(b)&&&\\
 \hline
 $\bt_1$ : Get object 			&  \mbox{Holding object}& $\lnot \mbox{Holding object} \land\lnot \mbox{Object at goal}$ & $\emptyset$ \\ 
 \hline
 $\bt_2$ : Place object if  possible &  \mbox{Object at goal}	& $\mbox{Holding object} \land\lnot \mbox{Object at goal}$ & $\mbox{Holding object}$ \\  
 %\hline
 %$\bt_3$ : Object at goal			&  - & $\emptyset$ & -\\
\hline
\hline
 Figure \ref{bt_big}&&&\\
 \hline
 $\bt_5$:  Move to safe area &  \mbox{in safe area} & $\lnot \mbox{in safe area}$ & $\emptyset$ \\ 
 \hline
 $\bt_8$:  Recharge &  \mbox{Proper battery level} & $ \mbox{in safe area} \land \lnot \mbox{proper battery level}$ & $\mbox{in safe area}$ \\ 
 \hline
 $\bt_{19}$:  Move to Object &  \mbox{Robot near object} & $ \mbox{in safe area} \land \mbox{proper battery level} \land \lnot \mbox{object at goal} \land \lnot \mbox{object in gripper} \land \lnot \mbox{robot near object} \land \mbox{Free path to object exists}$ & $\mbox{in safe area} \land  \mbox{proper battery level}$ \\ 
 \hline
  $\bt_{20}$:  Grasp object with left arm &  \mbox{Object in gripper} & $ \mbox{in safe area} \land \mbox{proper battery level} \land \lnot \mbox{object at goal} \land \lnot \mbox{object in gripper} \land \mbox{robot near object}$ & $\mbox{in safe area} \land  \mbox{proper battery level}$ \\ 
  \hline
    $\bt_{21}$: (skipped for lack of space) &&& \\ 
  \hline
    $\bt_{32}$:  Move to goal &  \mbox{Robot near goal} & $ \mbox{in safe area} \land \mbox{proper battery level} \land \lnot \mbox{object at goal} \land \mbox{object in gripper} $ & $\mbox{in safe area} \land  \mbox{proper battery level} \land  \mbox{object in gripper}$ \\ 
  \hline
    $\bt_{33}$:  Place object at goal&  \mbox{object at goal} & $ \mbox{in safe area} \land \mbox{proper battery level}  \land \lnot \mbox{object at goal} $ & $\mbox{in safe area} \land  \mbox{proper battery level} $ \\ 
  \hline
   $\bt_{36}$:  Ask other agent to place object&  \mbox{object at goal} & $ \mbox{in safe area} \land \mbox{proper battery level}  \land \lnot \mbox{object at goal}  \land  \mbox{subtree $\bt_{11}$ returned failure} $ & $\mbox{in safe area} \land  \mbox{proper battery level}  \land  \mbox{subtree $\bt_{11}$ returned failure}$ \\ 
  \hline
    $\bt_{39}$:  Move to charger&  \mbox{robot at charger} & $ \mbox{in safe area} \land \mbox{proper battery level}  \land  \mbox{object at goal}  $ & $\mbox{in safe area} \land  \mbox{proper battery level}\land  \mbox{object at goal} $ \\ 
  \hline
\end{tabular}
\end{center}
\end{table}

For the next example we use the design principle \emph{implicit sequence}
\cite{colledanchiseBehaviorTreesRobotics2018,paxtonRepresentingRobotTask2019},
where each child of a fallback node is designed to satisfy the precondition of a sibling to its left, while not failing, see $\hat C_i\subset C_i$ in Equation (\ref{eq_cibar}) below. Therefore, the numbering is also done from right to left as part of applying Theorem~\ref{thm_main}.

\begin{lemma}[Implicit sequence]
If $\bt_0=\mbox{Fallback}(\bt_N, \ldots, \bt_1)$, and forall $i<N$ there is $j>i$ such that $S_j \cup R_j \supset S_i$
%as illustrated in Figure~\ref{proof_illustrations}(b), the set
%$C_i = \bigl( \bigl(\bigcup_{j \geq i} \Omega_j\bigr) \cup S_0\bigr) \cap C_{p(i)}$ that needs to be invariant 
%corresponds to  
we have that
\begin{align}
I_i &= \bigcap_{j > i} F_{j} \cap I_{p(i)} \\
 \Omega_i &=  \bigcap_{j > i} F_{j} \cap I_{p(i)} \cap R_i  \\
 C_i \supset \hat C_i&= (R_i \cup S_i ) \cap \bar C   \label{eq_cibar}%\bigcap S_{j < i} \\
\end{align}
\end{lemma}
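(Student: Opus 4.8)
\emph{Proof strategy.} For the first identity I would induct over the children of $\bt_0$ from left to right. Since $\bt_0=\mbox{Fallback}(\bt_N,\ldots,\bt_1)$, the children listed left to right are $\bt_N,\bt_{N-1},\ldots,\bt_1$, so $\bt_N$ is the leftmost child and $b(i)=i+1$ for $1\leq i<N$. The base case $i=N$ holds because the influence-region definition gives $I_N=I_{p(N)}$, which coincides with $\bigcap_{j>N}F_j\cap I_{p(N)}$ since the empty intersection equals $X$. For the inductive step, $p(i)$ is a Fallback, so Equation~(\ref{eq_influence_fall}) gives $I_i=I_{b(i)}\cap F_{b(i)}=I_{i+1}\cap F_{i+1}$, and substituting the hypothesis $I_{i+1}=\bigcap_{j>i+1}F_j\cap I_{p(i)}$ gives $I_i=\bigcap_{j>i}F_j\cap I_{p(i)}$. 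The second identity is then immediate from Definition~\ref{def_operating_region}: $\Omega_i=I_i\cap R_i=\bigcap_{j>i}F_j\cap I_{p(i)}\cap R_i$.

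For the third statement I would read the lemma with $\bt_0$ playing the role of the root of the tree under analysis (so $I_{p(i)}=X$ and the children of $\bt_0$ form a level of abstraction), exactly as is implicitly done in the standard-sequence lemma. Then Theorem~\ref{thm_main} gives $C_i=\bigl(\bigl(\bigcup_{j\geq i}\Omega_j\bigr)\cup S_0\bigr)\cap\bar C$, so it is enough to prove $R_i\cup S_i\subseteq\bigl(\bigcup_{j\geq i}\Omega_j\bigr)\cup S_0$. Two facts do the work, both obtained by applying the Fallback composition identity~(\ref{eq_fall_f}) repeatedly: $F_0=\bigcap_{k=1}^{N}F_k\subseteq F_i$; and, for every $j<i$, $\Omega_j=\bigcap_{k>j}F_k\cap I_{p(j)}\cap R_j\subseteq F_i$, because $i$ is one of the indices $k>j$. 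By Lemma~\ref{lem_operating_region} together with $\Omega_0=R_0$, the sets $\Omega_1,\ldots,\Omega_N,S_0,F_0$ partition $X$, so $\bigl(\bigcup_{j\geq i}\Omega_j\bigr)\cup S_0=X\setminus\bigl(F_0\cup\bigcup_{j<i}\Omega_j\bigr)$. Since $R_i\cup S_i$ is disjoint from $F_i$, it is disjoint both from $F_0$ and from every $\Omega_j$ with $j<i$, hence contained in $\bigl(\bigcup_{j\geq i}\Omega_j\bigr)\cup S_0$; intersecting with $\bar C$ gives $\hat C_i=(R_i\cup S_i)\cap\bar C\subseteq C_i$.

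The hypothesis that for each $i<N$ some $j>i$ has $S_j\cup R_j\supseteq S_i$ is the structural hallmark of an implicit-sequence design: it guarantees that as soon as $\bt_i$ enters its success region some left sibling of larger index is already running or succeeding, which is what makes the right-to-left relabelling give a genuine climb through the operating regions in the sense of Theorem~\ref{thm_main}, and what makes it natural to design each $u_i$ against the simpler region $\hat C_i$ (``do not fail, stay in $\bar C$'') rather than against the full $C_i$. I expect the main obstacle to be the bookkeeping at the boundary sets: one has to verify carefully that $\Omega_1,\ldots,\Omega_N,S_0,F_0$ really do partition $X$ — which is precisely where $\bt_0$ being the root is used — and to be precise about why an inclusion, not an equality, is both the correct and a sufficient statement in~(\ref{eq_cibar}).
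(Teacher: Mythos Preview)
Your proposal is correct and follows essentially the same route as the paper: the formulas for $I_i$ and $\Omega_i$ are unwound from the Fallback definition, and the inclusion $\hat C_i\subseteq C_i$ is obtained by observing that every $\Omega_j$ with $j<i$ (and $F_0$) lies inside $F_i$, so anything in $R_i\cup S_i$ must land in $\bigl(\bigcup_{j\geq i}\Omega_j\bigr)\cup S_0$ by the partition of $X$. The one point on which the paper differs is that it actually \emph{uses} the hypothesis ``for all $i<N$ there is $j>i$ with $S_j\cup R_j\supseteq S_i$'' to establish the auxiliary fact $S_0=S_N$ (since $S_i\cap I_i=\emptyset$ for $i\neq N$), and then phrases the target inclusion with $S_N$ in place of $S_0$; your argument shows, more cleanly, that the inclusion in~(\ref{eq_cibar}) holds without invoking the hypothesis at all, and you correctly relegate the hypothesis to its real role as the structural condition that makes the right-to-left relabelling an actual climb in the sense of Theorem~\ref{thm_main}.
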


\begin{proof}
$I_i, \Omega_i$ are clear from the definition of a Fallback node, only executing the next child if the previous fails.
Furthermore, $S_0=S_N$, since $S_i \cap I_i = \emptyset$ for $i \neq N$, due to $S_j \cup R_j \supset S_i$ for some $j>i$.

From Theorem~\ref{thm_main} we have that $C_i = \bigl( \bigl(\bigcup_{j \geq i} \Omega_j\bigr) \cup S_0\bigr) \cap \bar C$.
Thus we need to show that $\bigcup_{j \geq i} \Omega_j \cup S_N \supset R_i \cup S_i \cup S_N$.
Assume $x \in R_i \cup S_i \cup S_N$ then $x \not \in F_i$ and hence $x \not \in \bigcup_{j < i} \Omega_j $. Therefore 
 $x \in \bigcup_{j \geq i} \Omega_j $.
\end{proof}

Using the results above we can compute the sets $C_i$ that needs to be kept invariant by the controller $u_i$ for each region $\Omega_i$.
Then a table like the one in Table \ref{tab_ex_thm} can be created, and used as design specification for the~$u_i$.

Table \ref{tab_ex_thm} also includes the results for the more complex BT in Figure~\ref{bt_big}, with node numbers given by a depth first traversal of the tree.
We will now see how such a BT can be created recursively from a list of actions, with corresponding precondition and postconditions.

%
%\begin{proof}
%For simplicity, we analyze the case when $p^i=p$ for all $i$, which will give bounds for the case with varying $p^i \geq p$.
%The probability of having $N$ desired transitions, and thereby being guaranteed to reach the goal $S_0$ is $p^N$.
%The probability of failing along the way is $1-p^N$, and the probability of that happening exactly $k$ times in a row, followed by then reaching the goal is 
%$p_k=(1-p^N)^k p^N$. 
%Thus, the probability of this happening at most k times is
%\begin{align}
% P_k 
% &= \Sigma_{i=0}^k p_i \\
% &= p^N  \Sigma_{i=0}^k (1-p^N)^i \\
% &= p^N  \bigl(   \frac{1-(1-p^N)^{k+1}}{1-(1-p^N)}   \bigr)\\
% &= 1- (1-p^N)^{k+1} \\
% &= 1-\gamma^{k+1}
%\end{align}
%which is the second part of the lemma.
%For the first part we note that for the case of $k$ undesired transitions, the total number of transitions is bounded by $(k+1)N$. Thus the expected number of transitions is bounded by
%\begin{align}
% E[T] &\leq \Sigma_{k=0}^\infty (k+1)N p_k \\
%&= \Sigma_{k=0}^\infty (k+1)N (1-p^N)^k p^N \\
%&= N p^N \Sigma_{k=0}^\infty (k+1) \gamma^k \\
%&=N p^N \Sigma_{k=0}^\infty k \gamma^k + \gamma^k \\
%&=N p^N (\frac{\gamma}{(1-\gamma)^2} +\frac{1}{1-\gamma})\\
%&=N p^N \frac{1}{(1-\gamma)^2} \\
%&=N/p^N
%\end{align}
%
%
%\end{proof}

%---------------------------------------------------------------------------------
 \section{A design principle exploiting modularity and feedback} 
  \label{sec_design}

\begin{figure}[!h]
	\centering
	\includegraphics[width=12cm]{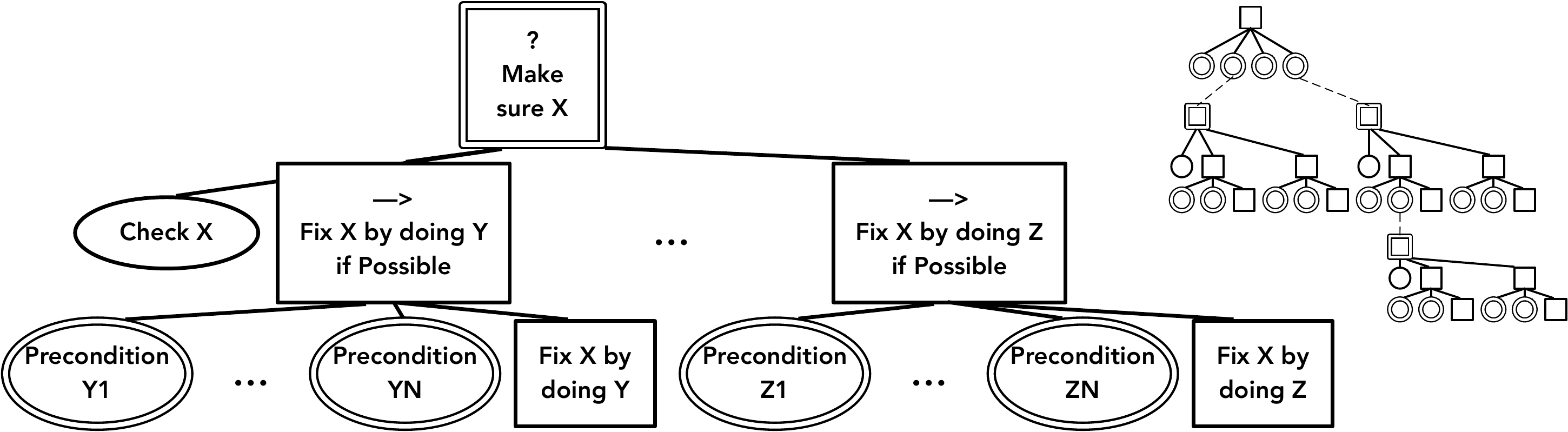}
	\caption{On the left we have a basic BT for achieving some condition X by either executing action Y or action Z. Note how the BT in Figure~\ref{bt_big} is created by connecting eight BTs on this form. This was done by starting with a sequence with the four top priority goals, as illustrated on the right, and then recursively replacing (dashed lines) a leaf condition with the root of a BT achieving that very condition. Both conditions that can be replaced, and Fallback nodes that have replaced a condition are double stroked in this figure as well as Figure~\ref{bt_big}.}
	\label{bt_ppa}
\end{figure} 

In this section we will give a concrete example on the use of hierarchical modularity and feedback in BTs.
Consider the example BT in Figure~\ref{bt_ppa}, designed to make sure some condition X holds.
If X is true it will immediately return success. If not, it will act to make X true, hence the name \emph{Make sure X}.
Sometimes there are multiple ways of making X true, in those cases the options are collected under a fallback node, so if one option (say Y) fails, or is not applicable, another option (say Z) can be invoked. Both of Y and Z have their own preconditions, describing when they can be invoked, as illustrated in the figure.

The key idea is now to recursively apply the design on the left of Figure~\ref{bt_ppa}, as illustrated on the right of the figure.
First we list four top priority goals in a sequence node. Then, 
instead of just checking the conditions we can replace them with a small BT of the form to the left that tries to make them true.
The resulting BT will have some new conditions, which in turn can be replaced by small BTs and so on.
All conditions that can be replaced, or have been replaced, are marked with double strokes in both Figure~\ref{bt_big} and Figure~\ref{bt_ppa}.
Note that it does not make sense to replace the single stroked conditions, such as \emph{Check X}, as these already have actions for achieving them.

This recursive approach is a good example of the hierarchical modularity made possible by BTs.
The design of  Figure~\ref{bt_ppa} is only about achieving X, not about why, or what will happen later.
It also illustrates the use of feedback.
The BT first checks if something needs to be done, if not everything is fine. If something needs to be done it tries to achieve it, and if one option failed, another one is applied.

A detailed  analysis of the design above can be found in \cite{ogrenConvergenceAnalysisHybrid2020}, including a discussion on when it works and when it does not work. Here we note that applying Theorem~\ref{thm_main} we get the sets of Table \ref{tab_ex_thm}.
Remember that the execution is supposed to progress in increasing order of index labels, thus the table can be read from top to bottom. Note that the sets $C_i$ mostly correspond to not violating previously achieved subgoals, such as \emph{in safe set} and \emph{proper battery level}, but when carrying the object to the goal it also includes \emph{object in gripper}. Perhaps the most surprising item is $C_{36}$ for $\bt_{36}$, including \emph{subtree $\bt_{11}$ returns failure}. This is needed to avoid cases where one option first fails, but during the execution of a fallback option, the first option is somehow activated again before it fails yet again, causing a switching back and forth, see \cite{ogrenConvergenceAnalysisHybrid2020} for details.

\section{Guaranteeing safety and invariance using control barrier functions} 
\label{sec_cbf}
In this section we will see how Control Barrier Functions (CBFs) can sometimes be used to provide the invariance guarantees we need in Theorem~\ref{thm_main}.
Furthermore, we will see how the standard way of using CBFs to guarantee safety appears as a special case of Theorem~\ref{thm_main}.
Finally, we will see how to handle conflicting objectives, as trying to keep several sets invariant might not always be possible.

\subsection{Control Barrier Functions}

As seen in Section \ref{sec_convergence} above, the key aspect of Theorem~\ref{thm_main}
 is for $u_i$ to keep the set $C_i$ invariant. Fortunately, keeping sets invariant is the main objective of CBFs.
 Below we will assume that the execution in Definition \ref{def_execution} runs in continuous time, but corresponding discrete time concepts can also be created.

The key idea behind CBFs \cite{amesControlBarrierFunctions2019,ogrenAutonomousUCAVStrike2006} is to
specify a barrier function $h:X \rightarrow \mathbb{R}$ such that the so-called \emph{safe set} $\mathcal{C}$ is characterized by: 
$
\mathcal{C} = \{x\in X: h(x) \geq 0\}.
$
Given the continuous system dynamics $\dot x = f_C(x,u)$, if we choose controls $u \in U_{inv}$ where
\begin{align}
\label{eq:CBF}
U_{inv}=\bigl \{u \in U: \frac{dh}{dx}f_C(x,u) \geq - \alpha(h(x)) \bigr \},
\end{align}
and $\alpha \in \mathcal{K}$ is a class K functions \cite{amesControlBarrierFunctions2019}, we are guaranteed to stay in the safe set $x \in \mathcal{C}$.

\subsection{Guaranteeing safety and handling conflicting objectives}
Guaranteeing safety is often of highest priority. In this section we will see how we can use CBF to address the invariance property of Theorem~\ref{thm_main}, in a way that includes safety guarantees as a special case, as described in \cite{ozkahramanCombiningControlBarrier2020}.

Since we need the conditions $C_i$ to be invariant, we first make the following assumption:

\begin{assumption}
 Each condition $C_i:X \rightarrow \{0,1\}$ can be formulated in terms of a CBF $h_i$, see Equation (\ref{eq:CBF}), as follows 
  \begin{align}
  {C}_i &= \{x\in X:h_i(x) \geq 0\}
  \end{align}
\end{assumption}

Having just one CBF we can guarantee invariance, but if there are several, they might represent conflicting objectives, such as  \emph{in safe area} and \emph{at charger}, if the charger happens to be located outside the safe set.
Normally, the intersection of the corresponding control sets $U_{inv}$ would guarantee invariance of all sets, but 
if the objectives are conflicting, the intersection might be empty.
 In these cases we will make use of the fact that the BT includes a clear priority order of the objectives, e.g., that \emph{in safe area} is more important than \emph{at charger}. 
 The idea is then to include as many sets as possible in the intersection, while
 still making sure the intersection is non-empty.

Thus, we define the following sets of controls, 
where $U_i \subset U$ guarantees invariance of $C_i$, 
$\bar U_i \subset U$ guarantees invariance of all $C_j, j\leq i$ (but might be empty) and
$\hat U_i \subset U$ guarantees invariance of some of the $C_j, j\leq i$ (but is guaranteed to be non-empty).
\begin{definition}
 Let 
 \begin{align}
U_i &= \{u \in U: \frac{dh_i}{dx}f(x,u) \geq - \alpha(h_i(x))\} \label{eq:CBFi} \\
\bar U_i  &= \bigcap_{j=1}^i U_j \\
\hat U_i &= \bar U_j : j \leq i, \bar U_j \neq \emptyset \land (j = i \lor \bar U_{j+1} = \emptyset)  
\end{align}
\label{def:K}
\end{definition}
\vspace{-5mm}
We can now choose a control inside $\hat U_{i}$
that is as close as possible to some other desired value $w_i(x)$ that is designed to reach the current subgoal, as in the CBF-QP of \cite{amesControlBarrierFunctions2019},
\begin{align}
u_i = \mbox{ argmin}_u& ||u-w_i(x)||^2  \label{eq:local_ctrl} \\
\mbox{s.t.  } & u \in \hat U_{i} \nonumber 
\end{align}

If we apply this approach to an arbitrarily complex BT, such as the one in Figure \ref{bt_big}, with a safety objective as first priority,  the CBF approach above will guarantee that we will never violate this objective. In the best of worlds, we might achieve all objectives, but we know that the robot will always be safe.

%-----------------------
\section{Explainable AI and human robot interaction} 
\label{sec_exai}
%In this section we will see how BTs are connected to explainable AI, human robot interaction and the needs of a robot.

%--
%\subsection{Explainable AI, Safe AI and Human Robot Interaction}
As robots  share workspaces with humans to an increasing extent, questions regarding human robot interaction become more important. Safety, as seen above, is often most important, but to achieve efficient interaction it is also important that the human can predict, trust and understand the robot.

In \cite{endsleyAutonomousHorizonsSystem2015} a number of guidelines for trustworthy autonomy are mentioned. These include that the system should be transparent and traceable, in the sense that
\enquote{the system must be able to explicitly explain its reasoning in a concise and usable format (either visual or textual)}.
As illustrated in Figure~\ref{bt_big}, this requirement is satisfied by BTs in the sense that at any point, you can find the leaf node that is executing and follow the branch  all the way up to the root to see why this subtree is executing. If the recursive backward chained approach described in Section~\ref{sec_design} is used, reading 
the expanded preconditions (double stroked in Figure~\ref{bt_big}) we see that the robot is currently executing Move to object, (in order to) Make sure robot near object, (to) make sure object in gripper, (to) make sure object at goal.
There is a need for more work on user aspects of BTs, including human robot interaction, but early examples include \cite{paxtonUserExperienceCoSTAR2017}.

\section{Reinforcement Learning, Utility and BTs} 
\label{sec_rl}

Reinforcement learning (RL) is a research area aiming to produce near optimal controllers for a very general family of control problems. Sometimes so-called end-to-end solutions can be found, mapping raw sensor readings to actions, to extremely challenging problems \cite{vinyalsGrandmasterLevelStarCraft2019}.
If all problems could be solved end-to-end using RL, there would be no need for BTs, but there is reason to believe that modular hierarchical control structures will still be useful for a number of years, especially since they enable safety guarantees, see Section \ref{sec_cbf}, and transparency to a human operator, see Section \ref{sec_exai}.
A natural question is then how we can combine RL with BTs, ideally to get the performance of RL, and the guarantees and transparency of BTs.

\begin{figure}[!h]
	\centering
	\includegraphics[width=10cm]{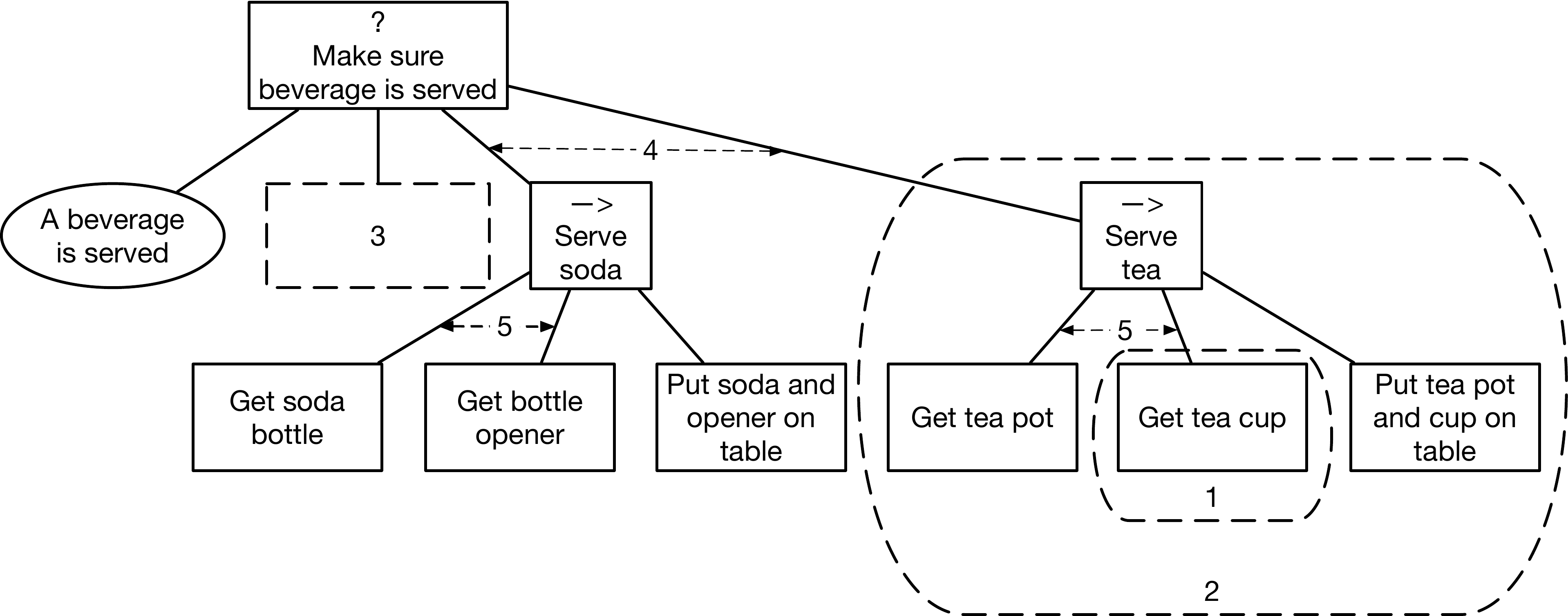}
	\caption{There are many ways of combining RL with BTs. Single actions can be replaced (1), as can entire subtrees (2). One could also add a new subtree (3), and keep the old subtrees as fallback options if the new one fails. Finally, the order of fallback options can be changed (4), as can the order of satisfying  preconditions (5).}
	\label{RL_options}
\end{figure} 

A number of different ways of combining BTs with RL are illustrated in Figure~\ref{RL_options}.
The first option that comes to mind is perhaps to replace a single action with RL (1 in Figure~\ref{RL_options}).
This was explored in \cite{pereira_framework_2015,kartasev_integrating_2019}, where an RL problem including states, actions and rewards was specified by the user.
If the problem domain is well suited for RL this approach can then be expanded, replacing subtrees (2 in Figure~\ref{RL_options}) in a bottom up fashion. This gradual approach can be seen as a low risk option to replacing an entire control structure with end-to-end RL.

RL can also be used to increase the performance of an existing BT. One way of doing this is to keep the current structure, but add an additional option using RL, as suggested in \cite{spragueAddingNeuralNetwork2018} and illustrated in (item 3 of Figure~\ref{RL_options}). If the RL option fails, the other ones will execute and achieve the subgoal. 

Another approach, (item 4 in Figure~\ref{RL_options}), focussing on subtree order was explored in \cite{dey_ql-bt_2013,fu_reinforcement_2016,zhang_combining_2017,zhu_behavior_2019}.
As the Q-value of a state-action pair in RL estimates the future reward, it was noted that Q-values could be used to choose between fallback options, reordering them based on the Q-value in the current state.
A similar idea was explored in \cite{hannaford_simulation_2016}, where the success probability of each child was estimated by gathering data during execution, and the order was updated to keep the child with highest value first.
Finally, the least explored option (item 5 in Figure~\ref{RL_options}) is to also reorder pre-conditions in the BT. For example, fetching a bunch of items, as in Figure~\ref{RL_options}, amounts to a small instance of a traveling salesman problem (TSP) where ordering might have impact on performance.

\section{Evolutionary Algorithms and BTs} 
\label{sec_ga}

Evolutionary algorithms, or genetic algorithms,
\cite{whitleyGeneticAlgorithmTutorial1994}, are local optimization algorithms inspired by the theory of evolution.
The basic idea is to maintain a family of solution candidates, and then create new solution candidates from the previous ones by applying mutations (small alteration of a candidate) and crossover (taking two candidates, mark a subset in each and swap the subsets). 
The candidates are then evaluated using a fitness function, and some portion of them are removed.

The modularity of BTs, with a uniform interface on all subtree levels, makes them well suited for evolutionary algorithms.
Applying mutation to a BT can be done by picking an arbitrary subtree, and replacing it with some other subtree, as illustrated in Figure~\ref{ga_mutation}(a). 
Futhermore, crossover can similarly be done by taking two BTs, choosing a subtree in each and swap them, as illustrated in Figure~\ref{ga_crossover}(b).

\begin{figure}[!h]
	\centering
	\begin{subfigure}[b]{0.45\textwidth}
         \centering
         \includegraphics[width=\textwidth]{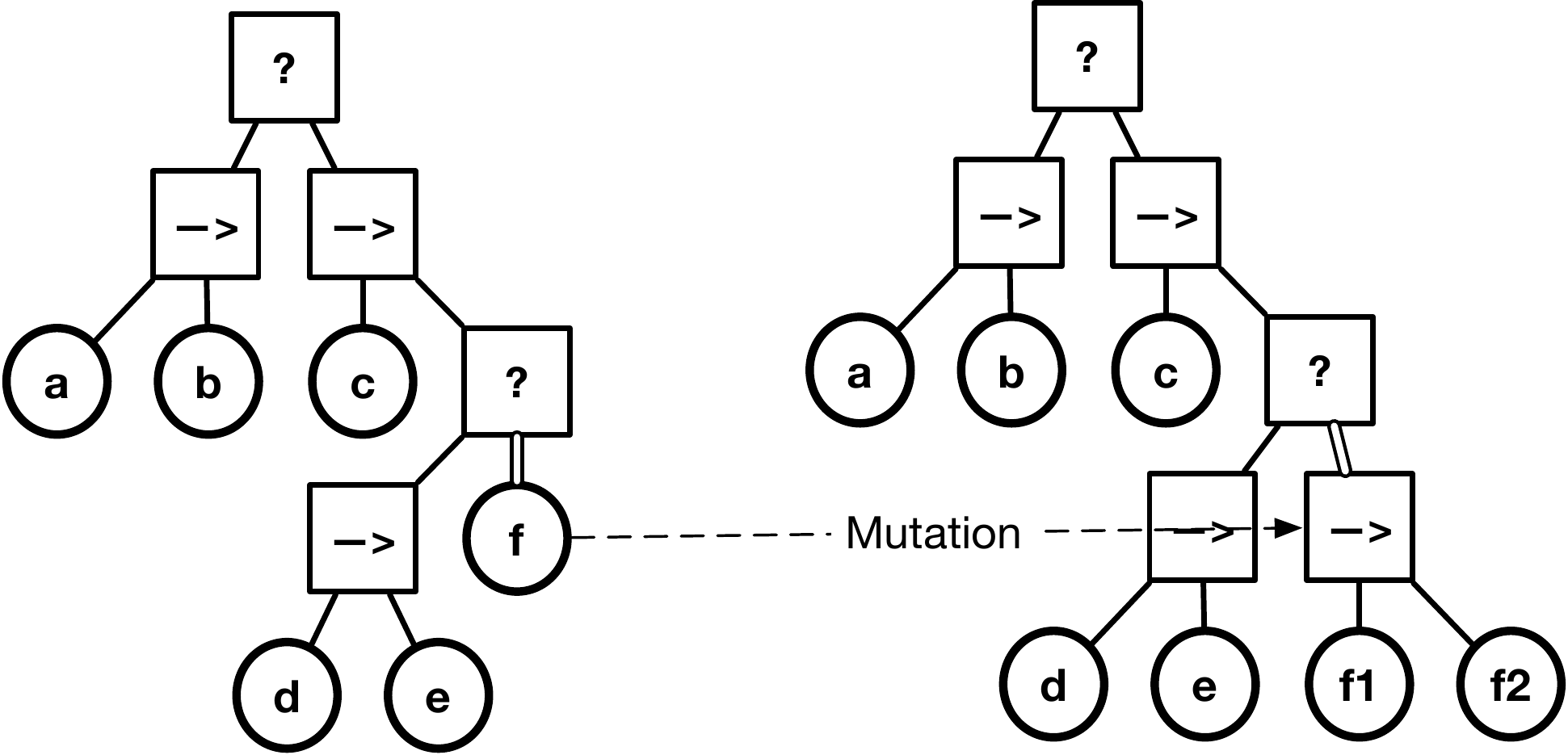}
         \caption{}
      \end{subfigure}
     \hfill
     \begin{subfigure}[b]{0.45\textwidth}
         \centering
         \includegraphics[width=\textwidth]{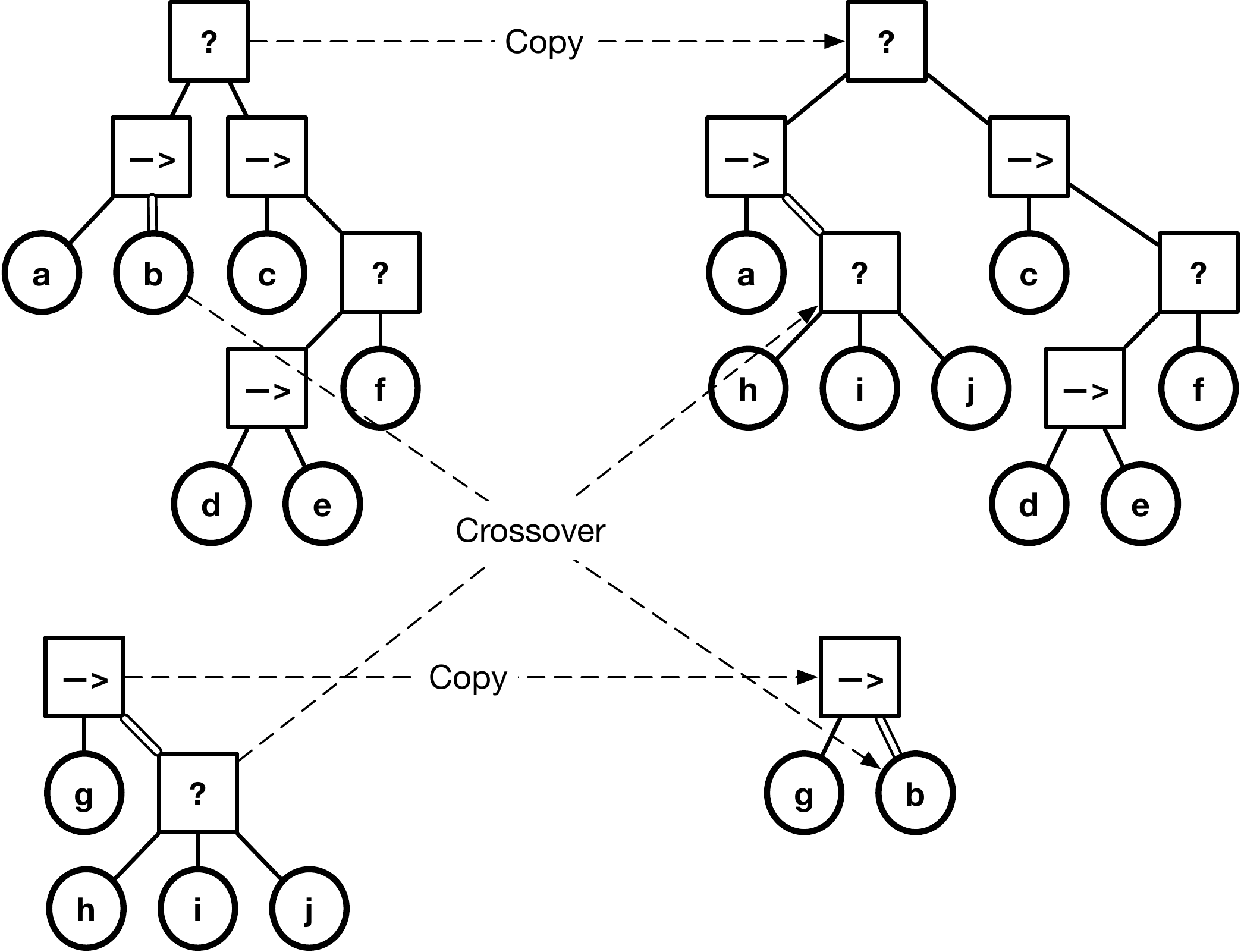}
                  \caption{}
     \end{subfigure}
     \vspace{5mm}
	\caption{Applying mutation and crossover to a BT }	
	\label{ga_mutation}
	\label{ga_crossover}
\end{figure}

%\begin{figure}[!h]
%	\centering
%	\includegraphics[width=8cm]{ga_mutation}
%	\caption{Applying a mutation to a BT by changing an arbitrary subtree. }
%	\label{ga_mutation}
%\end{figure} 
%
% \begin{figure}[!h]
%	\centering
%	\includegraphics[width=8cm]{ga_crossover}
%	\caption{Applying crossover to a BT   }
%	\label{ga_crossover}
%\end{figure} 

It has been shown that \emph{locality}, in terms of small changes in design giving small changes in performance, is important for the performance of evolutionary algorithms \cite{rothlauf2006locality}.
As seen in Lemma \ref{lem_influence_region}
 above, the influence region $I_i$ captures the part of the state space where a subtree can influence system behavior.
 Thus, for larger BTs, a given subtree will often have a fairly limited $I_i$ providing the locality described in \cite{rothlauf2006locality}.

A well known problem of evolutionary algorithms is so-called bloating \cite{lukeComparisonBloatControl2006}, where the average size of the  individuals in a population grows, without a corresponding increase in fitness. For BTs it is clear that a design could have large parts that do not contribute at all.
Both \cite{colledanchise_learning_2019} and \cite{hallawa_evolving_2020} describe methods for addressing this problem.
A practical approach is to try pruning different subtrees to see if fitness is reduced, whereas a more theoretical approach is to compute the influence regions $I_i$ for all the subtrees, and remove the ones with $I_i=\emptyset$.
%A solution to a Mario AI problem from \cite{colledanchise_learning_2019} is shown in Figure~\ref{colledanchise19}.
%There an automatic post-processing step was applied to collect all instances of e.g., the action Jump into one place, thereby reducing size and improving readability.
%
% \begin{figure}[!h]
%	\centering
%	\includegraphics[width=9cm]{colledanchise19}
%	\caption{A BT for playing the Mario AI game. Note how jumping takes place both for normal obstacles in cells 8 or 12 and holes in cells 16 and 17. }
%	\label{colledanchise19}
%\end{figure} 
%

Work on BTs and evolutionary algorithms can be found in 
\cite{limEvolvingBehaviourTrees2010,nicolauEvolutionaryBehaviorTree2017,colledanchise_learning_2019,iovinoLearningBehaviorTrees2020,paduraru_automatic_2019}.
Finally, combinations of BTs, evolutionary algorithms and planning can be found in  
\cite{styrudCombiningPlanningLearning2021}, and multi-agent problems using BTs and evolutionary algorithms have been addressed in 
\cite{neupane_learning_2019,jones_evolving_2018}.

%
%Genetic programming is one form of evolutionary algorithms that has been applied to BTs, see Figure~\ref{zhang18}, and a number of studies have shown that solutions that outperform manual designs can be found~\cite{hoff_evolving_2016,hutchison_evolving_2010,perez_evolving_2011,neupane_emergence_2019}. These methods have been used to automatically evolve new behaviors using high-level definitions given a new environment~\cite{estgren_behaviour_2017,paduraru_automatic_2019} and additional improvements have been made to increase scalability for evolving algorithms~\cite{zhang_behavior_2018}. A combination of grammatical evolution with BTs and path-finding algorithms such as A* was explored in~\cite{nicolau_evolutionary_2017}. 

%In multi-agent cases, it is often difficult to specify policies for every single agent and thus the same hand-written BTs have been used for many units. Using an evolutionary approach, individual BTs were developed for each agent, resulting in a more natural and efficient behavior of the whole team~\cite{neupane_designing_2019,neupane_learning_2019}.
% Another example of a multi-agent system is a robotic swarm. In this field, BTs have been generated using both grammatical evolution~\cite{neupane_emergence_2019} and genetic programming. In~\cite{jones_two_2018,jones_evolving_2018} the evolutionary system were based on a  \emph{distributed island model}.
%

\section{Planning and BTs} 
\label{sec_planning}

Planning algorithms are typically used to create a sequence of actions that will move the world state from a given starting state to some desired goal state. Planning can either be on a lower level, such as motion planning or grasp planning, or a higher level, such as task planning.
Low level planning are usually integrated as leaves in a BT, whereas high level planning can be used to create the BT itself.
The reason for combining BTs and planning algorithms is often to add the reactive feedback properties of a BT to the goal directed actions output by the planner.

The most straightforward way of using a task planner is to first run the planner to get a sequence of actions, and then execute this sequence. This works fine if the world is static and the actions are predictable. However, if an action fails, the sensing of the world was inaccurate, or an external agent changes the world state, the planned sequence of actions will not lead to the goal state.
A natural way to add feedback to the system is to monitor the execution to see if it runs as predicted, and re-plan once there is a significant enough deviation.
However, in many cases the original plan is still valid, we just need to jump to the proper phase. If a grasping action fails, we can try grasping again. If an object is picked up and later dropped, we can jump back to the pick up action. If an external agent helps us with some subgoal, we can jump ahead to the proper action in the list. As seen above, BTs are an appropriate tool for supplying this kind of feedback control, and were used in  e.g. \cite{paxtonRepresentingRobotTask2019}.

Planning might also be used to create a feedback policy. One example of this is the A* algorithm that computes the shortest path to goal from all initial states, not only the one currently occupied. Thus, if an unexpected action not only moves us back or forth on the expected path, but also sideways to a state that was not intended to be occupied, the plan still contains the proper action. The design described in Section \ref{sec_design} provides this functionality in a BT  \cite{ogrenConvergenceAnalysisHybrid2020}. An advantage is the larger region of attraction, while a drawback is that the BT needed to cover all potential situations can be very large.

Another use of the reactivity of BTs in connection to planning can be found in 
\cite{martinOptimizedExecutionPDDL2021}.
Here a BT is created from the output of the planner with the intent of reactively taking advantage of opportunities for parallel execution, and tasks finishing earlier than expected. By tracking the preconditions for each action, they are executed as early as possible, based on information that was not available at planning time.

The combination of planners with BT has been explored in 
     \cite{colledanchiseBlendedReactivePlanning2019,tadewos_automatic_2019,tadewos_--fly_2019,zhou_autonomous_2019,paxton19,schwab_capturing_2015,kucklingBehaviorTreesControl2018,styrudCombiningPlanningLearning2021}.
     Furthermore, the special cases of HTN planners were investigated in 
  \cite{neufeld_hybrid_2018,rovida_extended_2017,segura-muros_integration_2017,holzl_reasoning_2015},  and LTL planners in  
  \cite{colledanchise_synthesis_2017,lan_autonomous_2019,biggarFrameworkFormalVerification2020}.

\section{Conclusions} 
\label{sec_conclusions}
In this paper we have provided a control theoretic approach to BTs, showing how they can be seen as a hierarchically modular way to create a switched dynamical system, where the switching is based on feedback from lower level modules. 
We have also showed how the resulting operating regions can be computed, based on the specifications of parents, siblings and children of the node. Using these operating regions, we present sufficient conditions of convergence to the goal region of the entire BTs, as well as practical designs that can be used to create convergent BTs. Finally, we have showed how these core result connect to other research efforts on BTs, including control barrier functions, explainable AI, reinforcement learning, genetic algorithms and planning.

% Summary Points
\begin{summary}[SUMMARY POINTS]
\begin{enumerate}
\item Behavior trees represent a hierarchically modular way to combine controllers into more complex controllers.
\item Behavior trees enable feedback control, not only on the lowest level, but on all levels, as the interface explicitly includes meta information (feedback) regarding the applicability and progress of a controller, that enables the parent level to act based on this feedback.
\item The modular structure of behavior trees lends itself to formal analysis regarding convergence and region of attraction.
\item  Ongoing work connects behavior trees to other research areas such as planning and learning.
\end{enumerate}
\end{summary}

% Future Issues
\begin{issues}[FUTURE ISSUES]
\begin{enumerate}
\item Reinforcement learning can solve many problems end-to-end. However, many robot systems will need a modular structure combining separate capabilities, such as path planning and grasping. Behavior trees is a viable option for this structure and the connections between reinforcement learning and behavior trees needs to be explored further.
\item Explainable AI, learning by demonstration and human robot interaction (HRI) are areas where the transparency of BTs could play an important role. 
\item Behavior trees have been explored from an AI and robotics perspective, but very little work has been done from a control theoretic point of view.
%\item (Item 4 still open...)
\end{enumerate}
\end{issues}

%Disclosure
\section*{DISCLOSURE STATEMENT}
The authors are not aware of any affiliations, memberships, funding, or financial holdings that
might be perceived as affecting the objectivity of this review. 

% Acknowledgements
\section*{ACKNOWLEDGMENTS}
The authors gratefully acknowledge the support from SSF  through  the  Swedish  Maritime Robotics Centre (SMaRC) (IRC15-0046), and by FOI through  project 7135.

% References
%
% Margin notes within bibliography

%\bibliographystyle{IEEEtran}
\bibliographystyle{ar-style3}
\bibliography{biblioShort,non_bt_papers,MyZoteroLibrary}

\end{document}